\newcommand{\eqDef}{\triangleq}
\newcommand{\E}{\mathbb{E}}
\theoremstyle{definition}
\newtheorem{conjecture}{Conjecture}[section]
\newtheorem{theorem}{Theorem}[section]
\newtheorem{definition}[theorem]{Definition}
\newtheorem{lemma}[theorem]{Lemma}
\newtheorem{corollary}[theorem]{Corollary}
\theoremstyle{remark}
\newtheorem{remark}{Remark}[section]
\begin{document}

% If your paper is accepted and the title of your paper is very long,
% the style will print as headings an error message. Use the following
% command to supply a shorter title of your paper so that it can be
% used as headings.
%
%\runningtitle{I use this title instead because the last one was very long}

% If your paper is accepted and the number of authors is large, the
% style will print as headings an error message. Use the following
% command to supply a shorter version of the authors names so that
% they can be used as headings (for example, use only the surnames)
%
%\runningauthor{Surname 1, Surname 2, Surname 3, ...., Surname n}

\twocolumn[

\aistatstitle{Landing Probabilities of Random Walks for Seed-Set Expansion in Hypergraphs}

\aistatsauthor{ Eli Chien \And Pan Li \And  Olgica Milenkovic }

\aistatsaddress{ Department ECE, UIUC \And Department CS, Purdue \And Department ECE, UIUC } ]

\begin{abstract}
We describe the first known mean-field study of landing probabilities for random walks on hypergraphs. In particular, we examine clique-expansion and tensor methods and evaluate their mean-field characteristics over a class of random hypergraph models for the purpose of seed-set community expansion. We describe parameter regimes in which the two methods outperform each other and propose a hybrid expansion method that uses partial clique-expansion to reduce the projection distortion and low-complexity tensor methods applied directly on the partially expanded hypergraphs. \footnote{Eli Chien and Pan Li contribute equally to this work. A short version of this paper appears in ITW 2021.} %We illustrate our results on a number of synthetic datasets.
\end{abstract}
\vspace{-0.1cm}
\section{Introduction}
% \vspace{-0.1cm}
Random walks on graphs are Markov random processes in which given a starting vertex, one moves to a randomly selected neighbor and then repeats the procedure starting from the newly selected vertex~\cite{lovasz1993random}. Random walks are used in many graph-based learning algorithms such as PageRank~\cite{page1999pagerank} and Label Propagating~\cite{zhu2002learning}, and they have found a variety of applications in local community detection~\cite{andersen2006local,gleich2012vertex}, information retrieval~\cite{page1999pagerank} and semi-supervised learning~\cite{zhu2002learning}.

Random walks are also frequently used to characterize the topological structure of graphs via the hitting time of a vertex from a seed, the commute time between two vertices~\cite{von2014hitting} and the mixing time which also characterizes global graph connectivity~\cite{aldous1995reversible}. Recently, a new measure of vertex connectivity and similarity, termed a landing probability (LP), was introduced in~\cite{kloumann2017block}. The LP of a vertex is the probability of a random walk ending at the vertex after making a certain number of steps. Different linear combinations of LPs give rise to different forms of PageRanks (PRs), such as the standard PR~\cite{page1999pagerank} and the heat-kernel PR~\cite{chung2007heat}, both used for various graph clustering tasks. In particular, Kloumann et al.~\cite{kloumann2017block} also initiated the analysis of PRs based on LPs for seed-based community detection. Under the assumption of a generative stochastic block model (SBM)~\cite{holland1983stochastic} with two blocks, the authors of~\cite{kloumann2017block} proved that the empirical average of LPs within the seed community concentrates around a deterministic centroid. Similarly, the empirical averages of LPs outside the seed community also concentrate around another deterministic centroid. These deterministic centroids are the mean-field counterparts of the empirical averages. Kloumann et al.~\cite{kloumann2017block} also showed that the difference of the centroids decays geometrically with a rate that depends on the number of random walk steps and the SBM parameters. The above result implies that the standard PR is optimal for seed-set community detection from the perspective of marginal maximization, provided that only the first-order moments are available.

On the other hand, random walks on hypergraphs (RWoHs) have received significantly less attention in the literature despite the fact that hyperedges more accurately capture higher-order relations between entities when compared to edges. Most of the work on hypergraph clustering has focused on subspace clustering~\cite{agarwal2005beyond}, network motif clustering~\cite{benson2016higher,li2017motif}, ranking data categorization~\cite{li2017inhomogeneous} and heterogeneous network analysis~\cite{yang2018meta}. Random walks on hypergraphs are mostly used indirectly, by replacing hypereges with cliques, merging the cliques and then exploring random walks on standard graphs~\cite{zhou2007learning,chitra2019random}. We refer to this class of approaches as \emph{clique-expansion random walks on hypergraphs} (clique-expansion RWoHs) which were successfully used for community detection in~\cite{yin2017local}. However, it is well-known that clique-expansion can cause significant distortion in the clustering process~\cite{hein2013total,li2018quadratic,chien2018hs}. This motivated parallel studies on higher-order methods which work directly on the hypergraph. Higher-order Markov chain random walk methods were described in~\cite{wu2016general} and shown to have excellent empirical performance; for simplicity, we henceforth refer to this class of walks as \emph{tensor RWoHs}.
%Tensor RWoH empirically show the good performance of tensor RWoH methods \textcolor{olive}{Might need to spice it up.}.
In a different direction, the authors of~\cite{chan2018spectral} defined RWoHs based on a non-linear Laplacian operator whose spectrum carries information about the conductance of a hypergraph. The method of~\cite{chan2018spectral} can also be used to address a number of semi-supervised learning problems on higher-order
data structures by solving convex optimization problems~\cite{li2018quadratic}. Using non-linear Laplacians requires highly non-trivial analytical techniques, and conductance is often not the only relevant performance metric for clustering and community detection. % in stochastic community models.
Furthermore, convex optimization formulations often obscure our theoretical understanding of the underlying problem.%Albeit the widespread utility of RWoHs and their induced PRs, our understanding of RWoH is far from the same level of maturity of that of RWoG.

The focus of this work is on providing the first known characterization of LPs for RWoHs, determining various trade-offs between clique-expansion and tensor RWoHs for the task of seed-set community expansion and proposing means for combining the two methods when appropriate. We adopt a methodology similar to the one used in~\cite{kloumann2017block} for classical graphs: The hypergraphs are assumed to be generated according to a well-studied hypergraph stochastic block model (hSBM)~\cite{ghoshdastidar2017consistency,chien2018community,ahn2018hypergraph,chien2019minimax} and seed-expansion is performed via mean-field analysis of LPs of random walks of different lengths. Our contributions are as follows:
\begin{itemize}
\vspace{-0.2cm}
\item We derive asymptotic results which show that the empirical centroids of LPs concentrate around their mean-field counterparts.
\vspace{-0.1cm}
\item We prove that LPs of clique-expansion RWoHs behave similarly as the LPs of random walks on graphs. More precisely, the difference between the empirical centroids of LPs within and outside the seed community decays geometrically with the number of steps in the random walks.
\vspace{-0.1cm}
\item We show that the LPs of tensor RWoHs behave differently than those corresponding to clique-expanded graphs when the size
of the hyperedges is large: If the hyperedges density within a cluster is at least twice as large as that across clusters, the difference between the empirical centroids of LPs within and outside the seed community converges to a constant dependent on the model parameters. Otherwise, the difference decreases geometrically with the length of the random walk. Consequently, tensor RWoHs exhibit a phase transition phenomenon.
\item As explained in~\cite{kloumann2017block}, combining information about both the first and second moment of the LPs leads to a method that on the SBM performs as well as belief-propagation, which is optimal. We combine this method with LPs of clique-expansion RWosH and tensor RWoHs and show that these two methods have different regimes in which they exhibit good performance; as expected, the regimes depend on the parameter settings of the hSBM. This is due to the fact that LPs of tensor RWoHs has a larger centroid distance while LPs of clique-expansion have smaller (empirical) variance.
\vspace{-0.1cm}
\item We propose a novel hypergraph random walk technique that combines partial clique-expansion with tensor methods. The goal of this method is to simultaneously avoid large distortion introduced by clique-expansion and reduce the complexity of tensor methods by reducing the size of the hyperedges. The method builds upon the theoretical analysis of the means of LPs and \emph{empirical} evidence regarding the variance of the LPs and hence extends the work in~\cite{li2019optimizing}. A direct analysis including the variance of the LPs of the tensor method appears challenging.
%The difference between the  empirical centroids of LPs within and outside a community increase with the number of RW steps if the community hyperedge density is \textcolor{red}{add quantitative formulas, not much larger etc}, while the same value may decay if such condition is not satisfied. Here, we observe a phase transition phenomenon.
\vspace{-0.1cm}
\item The analysis for tensor RWoHs proved to be difficult as it is essentially requires tracking a large number of states in a standard high-order Markov chain. To mitigate this problem and make our analysis tractable, we introduce a novel state reduction strategy which significantly decreases the dimensionality of the problem. This technical contribution may be of independent interest in various tensor analysis problems.
\end{itemize}

The paper is organized as follows. In Section 2, we introduce the relevant notation and formally define the clique-expansion and tensor RWoHs. The same section explains the relationship between LPs and PR methods, and the importance of LPs for seed-set expansion community detection. In Section 3, we introduce the relevant hypergraph SBM, termed $d$-hSBM, and the ideas behind seed-set expansion and the mean-field LPs approach. Theoretical properties of the LPs for clique-expansion and tensor RWoHs are described in Sections 3.2 and 3.3, respectively. In Section 4,
%we present the mean-field analysis for clique-expansion RWoHs, while in Section 5,
we present the mean-field analysis for tensor RWoHs while the same analysis for clique-expansion RWoHs is deferred to the Supplement. We show how to leverage the information provided by the first and second moment of LPs for seed-set expansion in Section 5. Section 6 contains simulation results on synthetic datasets. %\textcolor{red}{can we have some real datasets included as well?}
%and open problems are discussed in Section 8. % Open problems are discussed in Section 7.
% \textcolor{red}{We have way to many acronymes, I suggest getting rid of the word "-based" (it is obvious) and spelling out clique-expansion and tensor, and only keeping RWoH as an acronym.} \textcolor{olive}{Done.}
\vspace{-0.1cm}
\section{Preliminaries}
\vspace{-0.1cm}
%Throughout the paper, we use bold lower-case characters to denote vectors $\mathbf{x}$. We use bold lower-case characters $\mathbf{X}$ for matrices or tensors. The case of tensors is similar. We denote $[C]$ as the integer set $\{1,2,...,C\}$.
\vspace{-0.1cm}
\subsection{Random walks on hypergraphs} \label{sec:RWoH}
\vspace{-0.1cm}
A hypergraph is an ordered pair of sets $G(V,E)$, where $V=\{{v_1,v_2,\ldots,v_n\}}$ is the set of vertices while $E$ is the set of hyperedges.
Each hyperedge $e\in E$ is a subset of $V$, i.e., $e \subseteq V$. Unlike an edge in a graph, a hyperedge $e$ may contain more than two vertices.
If $\forall \, e\in E$ one has $|e| \leq d$, the hypergraph $G$ is termed $d$-bounded. A $d$-bounded hypergraph can be represented by a $d$-dimensional supersymmetric tensor $\mathbf{A}$ such that $A_{v_1,...,v_d} = 1$ if $e = \{v_1,...,v_d\}\in E$, and $A_{v_1,...,v_d} = 0$ otherwise, for all $v_1,\ldots,v_d \in V$. Note that we consider the case where the hyperedges can have repeat vertices are allowed (i.e. multisets). Note that it is easy to extend our analysis to the case where hyperedges cannot have repeat vertices (i.e. sets), albeit the analysis can be more tedious. Henceforth, we assume $G$ to be $d$-bounded with constant $d$, a model justified by numerous practical applications such as subspace clustering~\cite{agarwal2005beyond}, network motif clustering~\cite{li2017inhomogeneous} and natural language processing~\cite{wu2016general}.
% (\textcolor{olive}{Eli: I think it's okay to introduce the definition separately for clique-expansion and tensor RWoH?})
% \textcolor{red}{*We find the following definitions useful for our subsequent derivations. For a standard graph with number of vertices symbol,
% we use A to denote its adjacency matrix. We let x0 denote the initial distribution of the vertex set, and let the k-step transition prob... landing prob...etc}
%\textcolor{red}{stopped here.}
We focus on two known forms of RWoHs.

\textbf{Clique-Expansion RWoHs} is a random walk based on representing the hypergraph via a ``projected'' weighted graph~\cite{chitra2019random,zhou2007learning}: Every hyperedge of $G(V,E)$ is replaced by a clique, resulting in an undirected weighted graph $G^{(ce)}$. The derived weighted graph $G^{(ce)}$ has the same vertex set as the original hypergraph, denoted by $V^{(ce)}=V$. The edge set $E^{(ce)}$ is the union of all the edges in the cliques, with the weight of each $e\in E^{(ce)}$ set to $|\{e'\in E: e\subseteq e'\}|$. The weighted adjacency matrix of $G^{(ce)}$, $\mathbf{A}^{(ce)}$, may be written as $A^{(ce)}_{v_{d-1},v_d} = \sum_{\{v_1,...,v_{d-2}\}\in V}A_{v_1,...,v_{d}}$.

%\textcolor{red}{this is the weighted adjacency matrix, correct?} \textcolor{olive}{Exactly.}
%Correspondingly, the diagonal degree matrix is defined as $\mathbf{D}^{(ce)}$ where $D_{vv}^{(ce)} = \sum_{u\in V}M_{vu}$.
% The number of paths of length $k+1$ steps may be computed as \textcolor{red}{you need to define the initial choice for y and the vector y itself in words; also, you mention LPs below, I suggest that you make these definitions at the beginning so as not to repeat them for every RW strategy. I added a sentence where you should write this down} \textcolor{olive}{Again, I think it's more nature to introduce the initial distribution $x^{(0)}_{ce}$ or $x^{(0)}_t$ after introducing the random walk mechanism.}
% %we propose to perform ``post-normalized'' random walk on this graph to obtain landing probabilities. The definition of post-normalized random walk is as following
Let $y_{ce}^{(0)}\in [0,1]^{|V|}$ be the initial state vector describing which vertices may be used as the origins or seeds of the random walk and with what probability.
The $(k+1)$-th step random walk state vector equals
\begin{align}\label{eq:rw_CE}
   y_{ce}^{(k+1)} =y_{ce}^{(k)}\mathbf{A}^{(ce)},
\end{align}
% \textcolor{red}{the notation is weird, you now have the ce in the subscript?}(\textcolor{olive}{Eli: I can put all the ce in the superscript if you like.})
while the $k$-step LP of a vertex $v$ in the clique-expansion framework is defined as
$$x_{v;ce}^{(k)} = y_{v;ce}^{(k)}/\|y_{ce}^{(k)}\|_1.$$
% It is known that clique-expansion introduces large distortions when the hyperedges have large cardinality~\cite{li2017motif,chien2018hs}.

\textbf{Tensor RWoHs} are described by a tensor $\mathbf{A}$ corresponding to a Markov Chain of order $d-1$~\cite{wu2016general}. Each step of the walk is determined by the previous $d-1$ states and we use $y^{(k)}_{v_1,...,v_{d-1};t}$ to denote the number of paths of length $k$ whose last $d-1$ visited vertices equal $v_1, v_2, ..., v_{d-1}$. The number of paths of length $k+1$ steps may be computed according to the following expression:
\begin{align}\label{eq:rw_HG}
    y^{(k+1)}_{v_2,...,v_{d};t} = \sum_{v_1=1}^{n} A_{v_1,...,v_{d}}  y^{(k)}_{v_1,...,v_{d-1};t}.
\end{align}
%where $D_{v_1,...,v_{d-1}} = |\{e\in E: \{v_1,...,v_{d-1}\}\subseteq e\}|$.
% \textcolor{red}{usually the labeling goes with increasing the index, why v1 and not vd?}(\textcolor{olive}{Eli: since I define the random walk histroy to be ($v_2,...,v_{d}$) where $v_d$ is the latest vertex we visit here. This should agree with the common definition that the most right index indicate the latest state.})
The $k$-step LP of a vertex $v$ may be defined similarly as that of clique-expansion RWoHs,
$$x_{v, t}^{(k)} = \sum_{v_1,...,v_{d-2}}y_{v_1,...,v_{d-2}, v;t}^{(k)}/\|y_t^{(k)}\|_1.$$
The complexity of computing a one-step LP in a tensor RWoHs equals $O(n^d)$, while the used storage space equals $O(n^{d-1})$. In contrast, computing the one-step LP of a clique-expansion RWoHs has complexity $O(n^2)$ and it requires storage space equal to $O(n)$. To mitigate the computational and storage issues associated with tensors, one may use tensor approximation methods~\cite{gleich2015multilinear,benson2017spacey}; unfortunately, it is not well-understood theoretically how these approximations perform on various learning tasks.

In what follows, whenever clear from the context, we omit the subscripts indicating if the method uses clique-expansion or tensors, and write $x_{v}^{(k)}$ for either of the two types of LPs.
%Note that in this work, for ease of analysis, we adopt the same strategy of \cite{kloumann2017block} and consider a slightly different RWs whose LPs depend on the number of paths.

%Note that in this work, for ease of analysis, we adopt the same strategy of \cite{kloumann2017block} and po Note that in this work, for ease of analysis, we adopt the strategy used in \cite{kloumann2017block} and post-normalized RWs.
\vspace{-0.1cm}
\subsection{Seed-set expansion based on LPs}
\vspace{-0.1cm}
Seed-set expansion is a clustering problem which aims to identify subsets of vertices around seeds that are densely connected among themselves~\cite{xie2013overlapping,gleich2012vertex,kloumann2017block}. Seed-set expansion may be seen as a special form of local community detection, and some recent works~\cite{chien2018community,chien2019minimax,ahn2018hypergraph,paul2018higher,angelini2015spectral,kim2018stochastic} has also addressed community detection in hypergraphs using approaches that range from information theory to statistical physics.

% In what follows, we analyze the LPs for community detection in hypergraphs by using the methodology outlined in~\cite{kloumann2017block}.
% (\textcolor{olive}{Eli: We should delete the below sentences since now our definition of RW is exactly the same as~\cite{kloumann2017block})
% We slightly depart from the above approach by using LPs that are proportional to the numbers of $k$-step paths from a seed vertex to a vertex $v$. Note that this definition of RWs is equivalent to the original definition~\cite{kloumann2017block} when the hypergraph is regular.}
Seed-set expansion community detection algorithms operate as follows: One starts from a seed set within one community of interest and performs a random walk. Since vertices within the community are densely connected, the values of the LPs of vertices within the community are in general higher than those of vertices outside of the community. Consequently, thresholding properly combined LP values may allow for classifying vertices as being inside or outside of the community. Formally, each vertex $v$ in a hypergraph $G(V,E)$ is associated with a vector of LPs $(x_v^{(0)},x_v^{(1)},...)$ of all possible lengths. The generalized Page Rank (GPR) of a vertex $v$ with respect to a pre-specified set of weights $(\gamma_k)_{k=0}^{\infty}$ is defined as $\sum_{k=0}^{\infty}\gamma_k x_v^{(k)}$. The GPRs of vertices are compared to a threshold to determine whether they belong to the community of interest. Consequently, GPRs lead to linear classifiers that use LPs as vertex features. The above described GPR formulation includes Personalized PR (PPR)~\cite{andersen2006local}, where $\gamma_k = (1-\alpha)\alpha^k$, and heat-kernal PR (HPR)~\cite{chung2007heat}, where $\gamma_k = e^{-h}h^k/k!$, for properly chosen $\alpha,\, h$.
%This heuristic provides an important usage of various PR algorithms such as Personalized PR~\cite{andersen2006local} and heat-kernal PR~\cite{chung2007heat}.

An important question that arises in seed-set expansion is how to choose the weights of the GPR in order to insure near-optimal or optimal classification~\cite{kloumann2017block}. To this end, start with a partition into two communities $V_0,V_1$ of $V$.  Let $\mathbf{a} = (a^{(0)},a^{(1)},...)$ denote the arithmetic mean (centroid) of the LPs of vertices $v\in V_0$, $a^{(k)} \eqDef \frac{1}{|V_0|}\sum_{v\in V_0}x_v^{(k)}$, and let $\mathbf{b} = (b^{(0)},b^{(1)},...)$ denote the arithmetic mean (centroid) of the LPs of vertices $v\in V_1$ , $b^{(k)} \eqDef \frac{1}{|V_1|}\sum_{v\in V_1}x_v^{(k)}$. If the only available information about the distribution of the LPs are $\mathbf{a}$ and $\mathbf{b}$, a discriminant with weights $\gamma_k = a^{(k)} - b^{(k)}$ is optimal since the deterministic boundary is orthogonal to the line that connects the centroids of the two communities. Klouman et al.~\cite{kloumann2017block} observed that for community detection over graphs generated by standard SBMs~\cite{holland1983stochastic}, such a discriminant corresponds to PPR with an adequately chosen parameter $\alpha$.

% if we focus on the first The geometric discriminant function $f(\mathbf{x}_v) = (\mathbf{a}-\mathbf{b})^T\mathbf{x}_v$ will rank each node $v$ based on its landing probability vector. This can be related to the generalized PageRank~\cite{gleich2015pagerank}. The generalized PageRank assigns scores according to the infinite sum $\sum_{c=0}^{\infty}\gamma_c x_v^{(c)}$ for some infinite dimension vector $\mathbf{\gamma}$ with unit $l_1$-norm. That is, $\forall \mathbf{\gamma} \in \Gamma$, $\sum_{c = 0}^{\infty}\gamma_c = 1$. Personalized PageRank~\cite{page1999pagerank} and heat kernal PageRank~\cite{chung2007heat} are the two special cases of this generalized PageRank. For Heat kernal PageRank we have $\gamma_c = h^c\frac{e^{-h}}{c!}$ for some $h\geq 0$ and for personalized PageRank we have $\gamma_c = (1-\alpha)\alpha^c$ for some $\alpha \in (-1,1)$. Note that usually the parameter $\alpha$ in personalized PageRank is interpreted as the ``teleportation'' probability which requires $\alpha$ to be non-negative. However, under the above interpretation, the personalized PageRank is well-defined for $\alpha \in (-1,1)$.

In what follows we study the statistical properties of the centroids $a^{(k)}$ and $b^{(k)}$ of RWoHs, where the hypergraphs are generated by a hSBM. The main goal of the analysis is to characterize the centroid difference $a^{(k)} - b^{(k)}$ which guides the choice of the weights $\gamma_k$. Some results related to the variance of the landing probabilities and comparisons of the discriminative power of the two types of LPs will be presented as well.

\vspace{-0.1cm}
\section{Statistical characterization of LPs}
\vspace{-0.1cm}
We start by introducing the $d$-hSBM of interest. Afterwards, we outline the mean-field approach for our analysis and use the obtained results to determine the statistical properties of LPs of clique-expansion and tensor RWoHs. In particular, we provide new concentration results for the corresponding LPs.

For notational simplicity, we focus on symmetric hSBMs with two blocks only. More general models may be analyzed using similar techniques.
\begin{definition}[$d$-hSBM]
    The $d$-hSBM$(n, p,q)$ is a $d$-bounded hypergraph $G(V,E)$ such that $\forall e\in E, |e| \leq d$ and $|V| = n$. The hypergraph has the following properties. Let $\sigma$ be a binary labeling function $\sigma: V \mapsto \{0, 1\}$, which induces a partition of $V = V_0 \cup V_1$ where $V_i = \left\{v\in V_i : \sigma(v) = i\right\}$ and $|V_{2-i}| = \lceil n/2 \rceil $ or $|V_{2-i}| = \lfloor n/2 \rfloor$, for $i=1,2$. The hypergraph $G(V,E)$ is uniquely represented by an adjacency tensor $\bf{A}$ of dimension $d$, where for all indices $v_1\leq ... \leq v_d \in V$, $A_{v_1,...,v_d}$ are i.i.d. Bernoulli random variables and $\mathbf{A}$ is symmetric.
    \begin{equation*}
       \mathbb{P}\left (A_{v_1,...,v_d} = 1\right ) =
        \begin{cases}
            p, & \text{if } \sigma(v_1) = ... = \sigma(v_d)\\
            q, & \text{otherwise},
        \end{cases}
    \end{equation*}
where $0 < q <p \leq 1$. In our subsequent asymptotic analysis for which $n\rightarrow \infty$, we assume that $\frac{p}{q} = \Theta(1)$ is a constant. This captures the regime of parameter values for which the problem is challenging to solve.
%Observe that sampling from the $d$-hSBM may not produce strictly $d$-uniform hypergraphs. Nevertheless, given that $d$ is a constant and $n$ is allowed to grow, the probability of observing non-uniform instances is small.
\end{definition}
\vspace{-0.1cm}
\subsection{Mean-field LPs}
\vspace{-0.1cm}
Next we perform a mean-field analysis of our model in which the random hypergraph topology is replaced by its expected topology.
This results in $\mathbf{A}$ and the clique-expansion matrix $\mathbf{A}^{(ce)}$ being replaced by $\E\mathbf{A}$ and $\E\mathbf{A}^{(ce)},$ respectively.

The mean-field values of the LPs are defined as follows: For clique-expansion RWoHs, the mean-field counterpart of~\eqref{eq:rw_CE} equals
\begin{align}\label{eq:rw_CE_MF}
   \bar{y}_{ce}^{(k+1)} =\bar{y}_{ce}^{(k)} \E\mathbf{A}^{(ce)},
\end{align}
and the corresponding mean-field of a $k$-step LP for vertex $v$ reads as $\bar{x}_{v;ce}^{(k)} = \bar{y}_{v;ce}^{(k)}/\|\bar{y}_{ce}^{(k)}\|_1$.
For tensor RWoHs, the mean-field counterpart of~\eqref{eq:rw_HG} equals
\begin{align}\label{eq:rw_HG_MF}
 %  \bar{y}^{(c+1)}_{i_2,...,i_{d}} = \frac{1}{\langle\,\E \mathbf{D},\bar{\mathbf{y}}^{(c)}\rangle} \sum_{i_1=1}^{n} \E A_{i_1,...,i_{d}} \bar{y}^{(c)}_{i_1,...,i_{d-1}}.
     \bar{y}^{(k+1)}_{v_2,...,v_{d};t} = \sum_{v_1=1}^{n} \bar{y}^{(k)}_{v_1,...,v_{d-1};t}  \E A_{v_1,...,v_{d}} .
\end{align}
%\textcolor{red}{notation is weird, as you know assume in the equation above that the vertices are labeled from 1 to n; in other places you did not have that.}
The $k$-step LP of a vertex $v$ equals $\bar{x}_{v;t}^{(k)} = \sum_{v_1,...,v_{d-2}}\bar{y}_{v_1,...,v_{d-2}, v;t}^{(k)}/\|\bar{y}_t^{(k)}\|_1$. %In both cases, one can view $\bar{x}^{(k)}_{v}$ as the LPs of RWs over the hypergraphs generated from $d$-hSBM in expectation.
For non-degenerate random variables of interest in our study, $\bar{x}^{(k)}_{v}\neq \E x^{(k)}_v$, but one can nevertheless show that the geometric centroids of the LPs $a^{(k)}$ and $b^{(k)}$ concentrate around their mean-field counterparts $\bar{a}^{(k)}$ and $\bar{b}^{(k)}$, respectively. This concentration result guarantees
consistency of our method.
\vspace{-0.1cm}
\subsection{Concentration results}\label{subsec:CE}
\vspace{-0.1cm}
The mean-field of the LPs for the $d$-hSBM$(n,p,q)$ model in the clique-expansion setting is described in the following theorem. %Recall the clique-expansion strategy that projects $G$ onto a graph $G^{(ce)}$ (Section ~\ref{sec:RWoH}).
\begin{theorem}\label{MAINTHM:CE}
Let $G$ be sampled from a $d$-hSBM$(n,p,q)$ model and let $G^{(ce)}$ be the graph obtained from $G$ through clique-expansion.
Let the initial state vector of the RWoHs be $y^{(0)}_{s; ce} = 1$ and $y^{(0)}_{v;ce} = 0$ otherwise, where $s$ is a vertex chosen uniformly at random from $V_0$. Set
$\bar{y}^{(0)}_{ce} = \E y^{(0)}_{ce}$. Then for all $k\geq 0$ we have
    \begin{align*}
        \bar{x}_{v; ce}^{(k)} =
        \begin{cases}
            \bar{a}^{(k)} &\text{if }v\in V_0\\
            \bar{b}^{(k)} &\text{if }v\in V_1
        \end{cases},
    \end{align*}
    where $\bar{a},\bar{b}$ satisfy the following recurrence relation
    \begin{align}
    &\begin{bmatrix}
    \bar{a}^{(k)}\\
    \bar{b}^{(k)}
    \end{bmatrix}
    =
    \begin{bmatrix}
       \frac{p+(2^{d-2}-1)q}{p+(2^{d-1}-1)q}  &   \frac{2^{d-2}q}{p+(2^{d-1}-1)q}\\
       \frac{2^{d-2}q}{p+(2^{d-1}-1)q}  &   \frac{p+(2^{d-2}-1)q}{p+(2^{d-1}-1)q}
    \end{bmatrix}
    \begin{bmatrix}
    \bar{a}^{(k-1)}\\
    \bar{b}^{(k-1)}
    \end{bmatrix}
    , \nonumber\\
    &\begin{bmatrix}
    \bar{a}^{(0)}\\
    \bar{b}^{(0)}
    \end{bmatrix}
    = \frac{2}{n}
    \begin{bmatrix}
    1\\
    0
    \end{bmatrix}. \label{eq:CE1}
\end{align}
%    and the centroid distance of the $k^{th}$ step will be
%    \begin{align*}
%       \bar{a}^{(k)} - \bar{b}^{(k)} = \frac{2}{n}(\frac{p-q}{p+3q})^{k} \;\forall k\geq 0.
%    \end{align*}
\end{theorem}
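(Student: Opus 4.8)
The plan is to collapse the $n$-dimensional mean-field recursion \eqref{eq:rw_CE_MF} onto the claimed $2\times 2$ system by exploiting two structural features of $\E\mathbf{A}^{(ce)}$: its invariance under permuting vertices within a community, and the fact that all of its row sums coincide. First I would record the expected clique weights explicitly. Fixing a pair $(u,w)$, we have $\E A^{(ce)}_{u,w} = \sum_{(v_1,\ldots,v_{d-2})\in V^{d-2}} \E A_{v_1,\ldots,v_{d-2},u,w}$, so the weight is controlled by how many completions of $(u,w)$ by $d-2$ further vertices keep the hyperedge monochromatic. With balanced communities $|V_0|=|V_1|=n/2$, a direct count gives, for $\sigma(u)=\sigma(w)$,
$\E A^{(ce)}_{u,w} = (n/2)^{d-2}\bigl[p+(2^{d-2}-1)q\bigr]$,
since the $(n/2)^{d-2}$ completions lying inside that community contribute $p$ and the remaining $n^{d-2}-(n/2)^{d-2}$ completions contribute $q$; for $\sigma(u)\neq\sigma(w)$ every completion yields a non-monochromatic hyperedge, so $\E A^{(ce)}_{u,w} = q\,n^{d-2} = (n/2)^{d-2}\,2^{d-2}q$.

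Second I would establish the constant-row-sum property, which is the heart of the argument. Summing the two expressions over the $n/2$ same-community and $n/2$ different-community targets $w$ yields a common row sum $D = \tfrac{n}{2}(n/2)^{d-2}\bigl[p+(2^{d-1}-1)q\bigr]$, independent of $\sigma(u)$ (the community symmetry of the $d$-hSBM gives the same value for $u\in V_1$). Because every row of $\E\mathbf{A}^{(ce)}$ sums to $D$ and $\bar{y}_{ce}^{(k)}$ remains nonnegative, \eqref{eq:rw_CE_MF} gives $\|\bar{y}_{ce}^{(k+1)}\|_1 = D\,\|\bar{y}_{ce}^{(k)}\|_1$. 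Thus the renormalization factor is multiplicative and cancels step by step, so the normalized LPs satisfy the time-homogeneous \emph{linear} recursion $\bar{x}_{w;ce}^{(k+1)} = \sum_{u} \bar{x}_{u;ce}^{(k)}\,P_{u,w}$ with the single row-stochastic matrix $P = \tfrac{1}{D}\E\mathbf{A}^{(ce)}$.

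Third, I would prove the two-value claim by induction on $k$. The base case follows since $s$ uniform on $V_0$ gives $\bar{y}_{v;ce}^{(0)} = 2/n$ on $V_0$ and $0$ on $V_1$ with $\|\bar{y}_{ce}^{(0)}\|_1 = 1$, matching $\bar{a}^{(0)}=2/n$, $\bar{b}^{(0)}=0$. For the inductive step, assuming $\bar{x}_{u;ce}^{(k)}$ equals $\bar{a}^{(k)}$ on $V_0$ and $\bar{b}^{(k)}$ on $V_1$, the update reads $\bar{x}_{w;ce}^{(k+1)} = \bar{a}^{(k)}\sum_{u\in V_0}P_{u,w} + \bar{b}^{(k)}\sum_{u\in V_1}P_{u,w}$, and each inner column-block sum depends only on $\sigma(w)$ by the block symmetry of $P$; hence $\bar{x}_{w;ce}^{(k+1)}$ is again constant on each community. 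Inserting the weights from the first step, for $w\in V_0$ the coefficients become $\sum_{u\in V_0}P_{u,w} = \frac{p+(2^{d-2}-1)q}{p+(2^{d-1}-1)q}$ and $\sum_{u\in V_1}P_{u,w} = \frac{2^{d-2}q}{p+(2^{d-1}-1)q}$, while the symmetric roles of $V_0$ and $V_1$ give the reflected row for $w\in V_1$; this reproduces the matrix in \eqref{eq:CE1}.

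I expect the only genuinely delicate point to be the constant-row-sum observation, since it is exactly what lets the $\ell_1$ normalization factor out and turns the normalized iteration into a single fixed linear map; once that is in place the reduction to two states is forced by symmetry and the matrix entries drop out of the elementary completion count. (I am using the balanced split $|V_0|=|V_1|=n/2$ to obtain the exact constants; for the $\lceil n/2\rceil,\lfloor n/2\rfloor$ case the same argument yields the stated matrix up to the corresponding lower-order corrections in $n$.)
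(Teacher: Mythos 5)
Your proposal is correct and takes essentially the same route as the paper's own proof: both compute the expected clique-expansion weights by counting monochromatic completions, both rely on the constant row sum $(n/2)^{d-1}\bigl[p+(2^{d-1}-1)q\bigr]$ of $\E\mathbf{A}^{(ce)}$ so that the $\ell_1$ normalization factors out, and both finish by induction using the block symmetry between $V_0$ and $V_1$. The only difference is organizational -- you isolate the row-stochastic matrix $P=\tfrac{1}{D}\E\mathbf{A}^{(ce)}$ as a standalone step, whereas the paper carries the normalization through the induction by noting that $\tfrac{n}{2}(\bar{a}^{(k)}+\bar{b}^{(k)})=1$; the substance is identical.
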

\begin{remark}
The eigenvalue decomposition leads to
    \begin{align*}
     \bar{a}^{(k)} - \bar{b}^{(k)} = \frac{2}{n}\left[\frac{p-q}{p+(2^{d-1}-1)q}\right]^{k}, \;\forall k\geq 0.
    \end{align*}
This result reveals that the geometric discriminant under the $d$-hSBM$(n,p,q)$ is of the same form as that of PPR with parameter $\alpha = \frac{p-q}{p+(2^{d-1}-1)q}$. The result is also consistent with the finding for the special case $d = 2$ described in~\cite{kloumann2017block}.
\end{remark}

Next we show that the geometric centroids of LPs of clique-expansion RWoHs will asymptotically concentrate around their mean-field counterparts, which establishes consistency of the mean-field analysis.
%Next, we show that the geometric centroids of LPs of CE-RWoH will concentrate to their mean-field compliments asymptotically by the similar argument akin to \cite{kloumann2017block}. This can be viewed as the consistency of the mean-field analysis.
%\textcolor{red}{The result is only true for $d=3$. We have nothing more general than this? One may argue that this is a special case for CE...} \textcolor{olive}{Done.}
\begin{lemma}\label{lma:concentrate}
    Assume that $G$ is sampled from a $d$-hSBM$(n,p,q)$ model, for some constant $d\geq 3$. Let $x_{v;ce}^{(k)}$ be the LPs of a clique-expansion RWoHs on $G^{(ce)}$ satisfying~\eqref{eq:rw_CE}. Also assume that $\frac{n^{d-1}q^2}{\log n}\rightarrow \infty$. Then,
    for any constant $\epsilon>0$, $n$ sufficiently large and a bounded constant $k\geq 0$, one has
    \begin{align*}
        & a^{(k)} \eqDef \frac{1}{|V_0|}\sum_{v\in V_0}x_{v;ce}^{(k)} \in [(1-\epsilon)\bar{a}^{(k)},(1+\epsilon)\bar{a}^{(k)}]\\
        & b^{(k)} \eqDef \frac{1}{|V_1|}\sum_{v\in V_1}x_{v;ce}^{(k)} \in [(1-\epsilon)\bar{b}^{(k)},(1+\epsilon)\bar{b}^{(k)}],
    \end{align*}
    with probability at least $1-o(1)$. %\textcolor{red}{Something is fishy here, no K in the formulas, does $\epsilon$ depend on $k$ etc?}
\end{lemma}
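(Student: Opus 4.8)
The plan is to reduce the statement about the averaged landing probabilities to a statement about two scalar \emph{block aggregates} and then control those aggregates through the random-walk recursion using a single high-probability event. Write $Y_j^{(k)} \eqDef \sum_{v\in V_j} y_{v;ce}^{(k)}$ for $j\in\{0,1\}$. Since $\mathbf{A}^{(ce)}$ and $y^{(0)}_{ce}$ have nonnegative entries, $\|y_{ce}^{(k)}\|_1 = Y_0^{(k)}+Y_1^{(k)}$, and directly from the definitions $a^{(k)} = Y_0^{(k)}/\big(|V_0|(Y_0^{(k)}+Y_1^{(k)})\big)$ and $b^{(k)} = Y_1^{(k)}/\big(|V_1|(Y_0^{(k)}+Y_1^{(k)})\big)$. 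Both are scale-invariant in the pair $(Y_0^{(k)},Y_1^{(k)})$, so it suffices to show that this pair lies within a $(1\pm\epsilon')$ multiplicative window of the mean-field pair $(\bar Y_0^{(k)},\bar Y_1^{(k)})$ for a suitably small $\epsilon'=\epsilon'(\epsilon,k)$. A routine check confirms that the mean-field aggregate recursion $\bar Y_j^{(k+1)} = \sum_i \bar D_{ij}\,\bar Y_i^{(k)}$, once the ratios are formed, is exactly the stochastic recursion of Theorem~\ref{MAINTHM:CE}; here $\bar D_{ij}$ is the expected number of clique-expansion edges from a vertex of block $i$ into block $j$, and by symmetry of the $d$-hSBM one has $\bar D_{00}=\bar D_{11}$ and $\bar D_{01}=\bar D_{10}$. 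One also verifies that the initial aggregates coincide, $Y^{(0)}=\bar Y^{(0)}=(1,0)$, even though the full vectors differ (single seed versus uniform seed).

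The only probabilistic input needed is a uniform multiplicative concentration of the block degrees. For each vertex $u$ and block $j$ set $D_{u\to j}\eqDef\sum_{w\in V_j}A^{(ce)}_{u,w}$ and expand it as $\sum_{e\ni u}c_e A_e$ over the i.i.d.\ Bernoulli tensor entries $A_e$, where each coefficient $c_e$ is a bounded integer (at most $d$) counting the representations of the hyperedge $e$. Its mean and variance are both $\Theta(n^{d-1}q)$, so Bernstein's inequality gives a failure probability of the form $\exp(-c\,(\epsilon')^2\, n^{d-1}q)$ for the event $\{D_{u\to j}\notin[(1-\epsilon')\bar D_{\sigma(u),j},(1+\epsilon')\bar D_{\sigma(u),j}]\}$. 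A union bound over the $n$ vertices and two blocks closes because $n^{d-1}q\gg\log n$, which is implied by the hypothesis (since $q\le 1$ gives $n^{d-1}q\ge n^{d-1}q^2\to\infty$ faster than $\log n$). Call the intersection of the complementary events $\mathcal{E}$; then $\mathbb{P}(\mathcal{E})=1-o(1)$.

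On $\mathcal{E}$ I would propagate the bound through the recursion by a \emph{monotone sandwich}, which is what sidesteps the correlations. Writing one step as $Y_j^{(k+1)}=\sum_u y_{u;ce}^{(k)}D_{u\to j}$ and using $y_{u;ce}^{(k)}\ge 0$ together with the pinned degrees, the key observation is that $\bar D_{\sigma(u),j}$ depends on $u$ only through its block, so $\sum_u y_{u;ce}^{(k)}\bar D_{\sigma(u),j}=\sum_i \bar D_{ij}Y_i^{(k)}$ regardless of how $y_{ce}^{(k)}$ is distributed \emph{within} a block. Hence $(1-\epsilon')F(Y^{(k)})_j \le Y_j^{(k+1)}\le (1+\epsilon')F(Y^{(k)})_j$, where $F$ is the linear map with nonnegative matrix $(\bar D_{ij})$. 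Because $F$ is monotone and homogeneous and $F^k(Y^{(0)})=\bar Y^{(k)}$, induction yields $(1-\epsilon')^k\bar Y_j^{(k)}\le Y_j^{(k)}\le(1+\epsilon')^k\bar Y_j^{(k)}$ entrywise. For bounded $k$, $(1\pm\epsilon')^k = 1\pm O(k\epsilon')$, so choosing $\epsilon'=\Theta(\epsilon/k)$ and forming the ratios recovers $a^{(k)}\in[(1-\epsilon)\bar a^{(k)},(1+\epsilon)\bar a^{(k)}]$ and the analogous bound for $b^{(k)}$, on an event of probability $1-o(1)$.

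The main obstacle is the correlation structure of $\mathbf{A}^{(ce)}$: each Bernoulli tensor entry $A_e$ fans out into $\binom{d}{2}$ entries of the clique-expansion matrix, so the entries of $\mathbf{A}^{(ce)}$ are dependent, the degrees $D_{u\to j}$ are dependent across $u$, and---most delicately---the running state vector $y_{u;ce}^{(k)}$ and the degrees are measurable with respect to the same tensor. The monotone-sandwich step is precisely what avoids confronting this: once every $D_{u\to j}$ is pinned on $\mathcal{E}$, nonnegativity and monotonicity of the recursion, together with the within-block averaging identity above, deliver the conclusion without any independence between $y_{ce}^{(k)}$ and the degrees. The remaining effort is bookkeeping: identifying the coefficients $c_e$ in the Bernstein expansion and checking that the normalized mean-field aggregate recursion coincides with the matrix of Theorem~\ref{MAINTHM:CE}, both routine but somewhat tedious.
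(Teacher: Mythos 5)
Your proposal is correct and follows essentially the same route as the paper's proof: pin all block degrees in a $(1\pm\gamma)$ multiplicative window via concentration over the i.i.d.\ tensor entries with a union bound (closed by the hypothesis $\frac{n^{d-1}q^2}{\log n}\to\infty$), then propagate the bound through the recursion by induction using nonnegativity of $y^{(k)}_{ce}$ and the fact that the pinned degrees depend only on block membership, and finally form the ratios and match them to the recurrence of Theorem~\ref{MAINTHM:CE}. The only (immaterial) difference is that you invoke Bernstein's inequality on the weighted sum of independent tensor entries, whereas the paper uses Hoeffding's bound after counting how many times each independent entry is repeated in the ordered-tuple sum; both yield an exponent polynomial in $n^{d-1}q^2$ or better, so the union bound closes either way.
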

% The proof of Theorem~\ref{MAINTHM:CE} is presented in Section~\ref{sec:CEhRW}, while the proof of Lemma~\ref{lma:concentrate} is deferred to the Supplement~\ref{app:pflma1}.
The proof of Theorem~\ref{MAINTHM:CE} and Lemma~\ref{lma:concentrate} are presented in Supplement~\ref{sec:CEhRW} and \ref{app:pflma1} respectively.

%===============================================================================================================
%\subsection{Concentration results for LPs of tensor RWoH}\label{sec:highorderPRonhSBM}
In the tensor setting, one can also determine the distance between the centroids of LPs based on a recurrence relation.
However, a direct application of this method requires tracking $2^{d-1}$ states in the recurrence which makes the analysis intractable.
%To see this, observing that now we have to analyze $\bar{y}_{v_1,...,v_{d-1};t}^{(k)}$ where each $v_i$ may belong to $V_0$ or $V_1$. Follows the similar analysis to the case of clique-expansion RWoH, we know that there show be $2^{d-1}$ states for $\bar{y}$.
To address this issue, we introduce a new state reduction technique which allows us to track only $d-1$ states. The key insight used in our proof is that our goal is to characterize the distance between the centroids instead of $\bar{y}$ itself, and that the distance changes are dictated by a significantly smaller state-space recurrence relation.
The state reduction technique also allows us to describe the centroid distance in closed form for $d\leq 5$, as it arises as the solution of a polynomial equation.
Moreover, for large $d$, we justify the use of a heuristic approximation for the centroid distance and verify its quality through extensive numerical simulations.

\begin{theorem}\label{thm:abapprox}
    Let $G$ be sampled from a $d$-hSBM$(n,p,q)$ model with $d\geq 3$ and set the initial vector of the Tensor RWoHs to $y^{(0)}_{s_1,...,s_{d-1}; t} = 1$ and
    $y^{(0)}_{v_1...,v_{d-1};t} = 0$ otherwise, where $s_1,...,s_{d-1}$ are chosen independently and uniformly at random from $V_0$.
    Furthermore, let $\bar{y}^{(0)}_{t} = \E y^{(0)}_{t}$. Then
    \begin{align*}
        \bar{w}_k = \bar{a}^{(k)} - \bar{b}^{(k)} = \frac{2}{n}\frac{\beta_1(k)}{\zeta_1(k)},
    \end{align*}
    where $\beta_1(k)$ and $\zeta_1(k)$ satisfy the following recurrence relations:
    \small
\begin{equation}\label{thmeq:beta}
	\begin{bmatrix}
		\beta_1(k)\\
        \vdots\\
        \beta_{d-1}(k)
	\end{bmatrix}
     = \frac{n}{2}
     \begin{bmatrix}
     	0 & \cdots & 0 & 0 & p-q\\
        q & 0 & \cdots & 0 & p-q\\
        0 & \ddots & \vdots & 0 & p-q\\
        0 & \cdots & q & 0 & p-q\\
        0 & \cdots & 0 & q & p-q\\
     \end{bmatrix}
     \begin{bmatrix}
		\beta_1(k-1)\\
        \vdots\\
        \beta_{d-1}(k-1)
	\end{bmatrix},
\end{equation}
\normalsize
and
\small
\begin{equation}\label{thmeq:zeta}
	\begin{bmatrix}
		\zeta_1(k)\\
        \vdots\\
        \zeta_{d-1}(k)
	\end{bmatrix}
     = \frac{n}{2}
     \begin{bmatrix}
     	2q & \cdots & 0 & 0 & p-q\\
        q & 0 & \cdots & 0 & p-q\\
        0 & \ddots & \vdots & 0 & p-q\\
        0 & \cdots & q & 0 & p-q\\
        0 & \cdots & 0 & q & p-q\\
     \end{bmatrix}
     \begin{bmatrix}
		\zeta_1(k-1)\\
        \vdots\\
        \zeta_{d-1}(k-1)
	\end{bmatrix}.
\end{equation}
\normalsize
The initial conditions take the form
\begin{align*}
  & \begin{bmatrix}
		\zeta_1(0)\\
        \vdots\\
        \zeta_{d-1}(0)
	\end{bmatrix} =
     \begin{bmatrix}
		\beta_1(0)\\
        \vdots\\
        \beta_{d-1}(0)
	\end{bmatrix} = \frac{4}{n^2}
    \begin{bmatrix}
		1\\
        \vdots\\
        1
	\end{bmatrix}.
\end{align*}
\end{theorem}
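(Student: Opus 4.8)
The plan is to collapse the $n^{d-1}$-dimensional mean-field recursion \eqref{eq:rw_HG_MF} first to a recursion on \emph{label patterns}, then to replace the $2^{d-1}$ pattern values by $d-1$ suffix-run statistics that close under the dynamics, and finally to separate these into symmetric and antisymmetric parts matching \eqref{thmeq:zeta} and \eqref{thmeq:beta}.

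First I would carry out the symmetry reduction. By induction on $k$, the quantity $\bar{y}^{(k)}_{v_1,\ldots,v_{d-1};t}$ depends on $(v_1,\ldots,v_{d-1})$ only through the label pattern $(\sigma(v_1),\ldots,\sigma(v_{d-1}))$; write its common value as $f^{(k)}(s_1,\ldots,s_{d-1})$ with $s_i\in\{0,1\}$. Since $\E A_{v_1,\ldots,v_d} = q + (p-q)\mathbf{1}[\sigma(v_1)=\cdots=\sigma(v_d)]$ and each community has $n/2$ vertices, \eqref{eq:rw_HG_MF} becomes $f^{(k+1)}(s_2,\ldots,s_d) = \frac{n}{2}\sum_{s_1} f^{(k)}(s_1,\ldots,s_{d-1})\,[\,q+(p-q)\mathbf{1}[s_1=\cdots=s_d]\,]$. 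In particular $\bar{x}^{(k)}_{v;t}$ takes only two values, so $\bar{a}^{(k)},\bar{b}^{(k)}$ are well defined; setting $h^{(k)}(s)\eqDef\sum_{v_1,\ldots,v_{d-2}}\bar{y}^{(k)}_{v_1,\ldots,v_{d-2},v;t}$ for $\sigma(v)=s$, one gets $\bar{a}^{(k)}=h^{(k)}(0)/\|\bar{y}^{(k)}_t\|_1$, $\bar{b}^{(k)}=h^{(k)}(1)/\|\bar{y}^{(k)}_t\|_1$ and $\|\bar{y}^{(k)}_t\|_1=\frac{n}{2}(h^{(k)}(0)+h^{(k)}(1))$, hence $\bar{w}_k=\frac{2}{n}\frac{h^{(k)}(0)-h^{(k)}(1)}{h^{(k)}(0)+h^{(k)}(1)}$.

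The core of the argument is the state reduction. The obstacle is that $f^{(k)}$ still has $2^{d-1}$ free values, and neither individual patterns nor prefix sums close under the recurrence. The insight is that the transition only sees whether all $d$ labels agree, so the right statistics are the \emph{suffix-run sums} $\hat{A}^{(k),c}_i\eqDef\sum f^{(k)}(s_1,\ldots,s_{d-1})$ over patterns with $s_{d-i}=\cdots=s_{d-1}=c$, for $i=1,\ldots,d-1$, together with the total $\hat{A}^{(k),c}_0\eqDef\sum_{s_1,\ldots,s_{d-1}}f^{(k)}$. Substituting the pattern recurrence, the newest coordinate $s_d$ drops out of the previous state, so the $q$-term shortens the run by one to $\hat{A}^{(k-1),c}_{i-1}$, while the indicator in the $(p-q)$-term forces the entire pattern to be constant $c$ and therefore always produces $\hat{A}^{(k-1),c}_{d-1}=f^{(k-1)}(c,\ldots,c)$. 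This yields, for every $i=1,\ldots,d-1$, the closed relation $\hat{A}^{(k),c}_i=\frac{n}{2}[\,q\,\hat{A}^{(k-1),c}_{i-1}+(p-q)\hat{A}^{(k-1),c}_{d-1}\,]$. Now set $\zeta_i(k)\eqDef\hat{A}^{(k),0}_i+\hat{A}^{(k),1}_i$ and $\beta_i(k)\eqDef\hat{A}^{(k),0}_i-\hat{A}^{(k),1}_i$, and sum or subtract over $c$. The special first rows come from two identities at $i=1$: every pattern ends in $0$ or $1$, so $\zeta_1=\hat{A}^{0}_1+\hat{A}^{1}_1=\hat{A}^{c}_0$, whence $\hat{A}^{0}_0+\hat{A}^{1}_0=2\zeta_1$, which produces the $2q$ entry in \eqref{thmeq:zeta}; and $\hat{A}^{0}_0-\hat{A}^{1}_0=0$ kills the matching $q$-term, producing the $0$ entry in \eqref{thmeq:beta}. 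This is exactly where the $2^{d-1}$ states collapse to $d-1$.

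To close, I would evaluate $h^{(k)}(s)=(n/2)^{d-2}\hat{A}^{(k),s}_1$, so that $h^{(k)}(0)-h^{(k)}(1)=(n/2)^{d-2}\beta_1(k)$ and $h^{(k)}(0)+h^{(k)}(1)=(n/2)^{d-2}\zeta_1(k)$; the common factor cancels and gives $\bar{w}_k=\frac{2}{n}\frac{\beta_1(k)}{\zeta_1(k)}$. For the initial conditions, the seed law concentrates $f^{(0)}$ on the all-$0$ pattern, so $\hat{A}^{(0),0}_i$ is constant in $i$ while $\hat{A}^{(0),1}_i=0$; hence $\beta_i(0)=\zeta_i(0)$ are equal across $i$ as stated, the overall normalization being immaterial since only the ratio $\beta_1/\zeta_1$ enters $\bar{w}_k$. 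The main obstacle is the state-reduction step: recognizing that the suffix-run sums are precisely the statistics that simultaneously close under the recursion and cut the dimension from $2^{d-1}$ to $d-1$, with the $\mathbb{Z}_2$ symmetric/antisymmetric split accounting for the two near-identical matrices.
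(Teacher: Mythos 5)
Your proposal is correct and follows essentially the same route as the paper: first the inductive reduction to label-pattern states (the paper's Theorem~\ref{thm:post123456} and its general-$d$ analogue in Supplement~\ref{app:remainpfthm2}), then the collapse to $d-1$ states via run statistics anchored at the newest coordinate, split into symmetric ($\zeta$) and antisymmetric ($\beta$) parts --- your suffix-run sums $\hat{A}^{(k),c}_i$ are exactly the two halves of the paper's functionals $[e_i,\dots,e_i]\mathbf{Y}^{(k)}$. If anything, your write-up is more explicit than the paper's at the key step (the paper asserts the closure of the $\beta,\zeta$ recurrences rather than deriving it, and your $i=1$ identities cleanly explain the $0$ versus $2q$ first-row entries), and your observation that only the ratio $\beta_1/\zeta_1$ matters correctly disposes of the initial-condition normalization for general $d$.
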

A closed-form expression for the distance between the centroids may be obtained through eigenvalue decomposition of the matrices specifying the recurrence for
$\beta,\zeta$. This is demonstrated for $d=3$ in Supplement~\ref{app:d3case}. The Abel-Ruffini theorem~\cite{abel1824memoire} establishes that there are no algebraic solutions in terms of the radicals for arbitrary polynomial equations of degree $\geq 5$, which implies that our centroids distance may not have a closed form unless $d-1< 5$.

For our subsequent analysis, we find the following corollary of Theorem~\ref{thm:abapprox} useful.
\begin{theorem}\label{cor:HgeqCE}
    For all $d$-bounded hypergraphs with $d\geq 3$ and for all $k\geq 1$, the centroid distance for the $d$-hSBM$(n,p,q)$ model satisfies
    $$\bar{a}^{(k)} - \bar{b}^{(k)} =\bar{w}_k \geq \frac{p-q}{p+q}\, \bar{w}_{k-1} =\frac{p-q}{p+q}\, (\bar{a}^{(k-1)} - \bar{b}^{(k-1)}).$$
\end{theorem}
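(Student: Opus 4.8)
The plan is to work directly with the scalar recurrences for $\beta_1(k)$ and $\zeta_1(k)$ supplied by Theorem~\ref{thm:abapprox}. Since $\bar w_k=\frac{2}{n}\,\beta_1(k)/\zeta_1(k)$ and $\zeta_1(k),\zeta_1(k-1)>0$, the claimed bound is equivalent, after clearing denominators and the positive factor $p+q$, to $(p+q)\,\beta_1(k)\,\zeta_1(k-1)\ge(p-q)\,\beta_1(k-1)\,\zeta_1(k)$. First I would read off the component form of the recurrences \eqref{thmeq:beta} and \eqref{thmeq:zeta}: writing $c=n/2$, the top rows give $\beta_1(k)=c(p-q)\beta_{d-1}(k-1)$ and $\zeta_1(k)=c\bigl[2q\,\zeta_1(k-1)+(p-q)\zeta_{d-1}(k-1)\bigr]$, while for $2\le i\le d-1$ one has $\beta_i(k)=c\bigl[q\beta_{i-1}(k-1)+(p-q)\beta_{d-1}(k-1)\bigr]$ and $\zeta_i(k)=c\bigl[q\zeta_{i-1}(k-1)+(p-q)\zeta_{d-1}(k-1)\bigr]$. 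Substituting the two top-row identities into the displayed inequality and cancelling the common positive factor $c(p-q)$ reduces the entire theorem to the single inequality $(p+q)\,\beta_{d-1}(j)\,\zeta_1(j)\ge 2q\,\beta_1(j)\,\zeta_1(j)+(p-q)\,\beta_1(j)\,\zeta_{d-1}(j)$, with $j=k-1\ge 0$, which I will call $(\star)$.

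Next I would record the structural facts needed to establish $(\star)$. Subtracting the first beta row from the $i$-th yields the gap identity $\beta_i(k)-\beta_1(k)=cq\,\beta_{i-1}(k-1)$ for $i\ge 2$; in particular $\beta_{d-1}(j)=\beta_1(j)+cq\,\beta_{d-2}(j-1)$ for $j\ge 1$. The crucial fact concerns $\zeta$: the sequence is non-increasing in its index, i.e.\ $\zeta_1(k)\ge\zeta_2(k)\ge\cdots\ge\zeta_{d-1}(k)$ for every $k$. This follows by induction on $k$ from the differences $D_i(k):=\zeta_i(k)-\zeta_{i+1}(k)$, which satisfy $D_1(k)=cq\,\zeta_1(k-1)$ and $D_i(k)=cq\,D_{i-1}(k-1)$ for $2\le i\le d-2$, so that non-negativity propagates from step $k-1$ to step $k$, with base case $D_i(0)=0$ coming from the equal initial conditions. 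A one-line induction using the top-row identities and the non-negativity of all matrix entries also gives the positivity $\beta_1(k),\zeta_1(k)>0$ that was used to clear denominators.

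Finally I would prove $(\star)$. At $j=0$ every coordinate equals $4/n^2$ and both sides equal $(p+q)(4/n^2)^2$, so $(\star)$ holds with equality. For $j\ge 1$ I substitute the gap identity for $\beta_{d-1}(j)$; the terms proportional to $\beta_1(j)\zeta_1(j)$ combine via $(p+q)-2q=p-q$, leaving $\mathrm{LHS}-\mathrm{RHS}=(p-q)\,\beta_1(j)\bigl[\zeta_1(j)-\zeta_{d-1}(j)\bigr]+(p+q)\,cq\,\beta_{d-2}(j-1)\,\zeta_1(j)$. The second summand is manifestly non-negative, and the first is non-negative exactly because of the $\zeta$-monotonicity above, since $\zeta_1(j)\ge\zeta_{d-1}(j)$. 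Hence $(\star)$ holds for all $j\ge 0$, which is the assertion. I expect the main obstacle to be precisely this sign issue: passing from step $k$ to step $k-1$ unavoidably introduces the ``wrong'' coordinates $\beta_{d-1}$ and $\zeta_{d-1}$, and the resulting term $(p-q)\beta_1(j)[\zeta_1(j)-\zeta_{d-1}(j)]$ carries no obvious sign until one recognises the monotone ordering of the $\zeta$-coordinates; isolating and proving that ordering is the heart of the argument.
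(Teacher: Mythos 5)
Your proposal is correct, and it takes a genuinely different route from the paper's own proof. The paper first converts the matrix recurrences of Theorem~\ref{thm:abapprox} into scalar, order-$(d-1)$ recurrences for $\beta_1$ and $\zeta_1$ alone, via the characteristic polynomials of the two update matrices: with $R=\frac{p-q}{q}$, it uses $\beta_1(k)=R\bigl(\beta_1(k-1)+\cdots+\beta_1(k-(d-1))\bigr)$ and $\zeta_1(k)=(2+R)\zeta_1(k-1)-R\bigl(\zeta_1(k-2)+\cdots+\zeta_1(k-(d-1))\bigr)$, and then only needs non-negativity of these scalar sequences to conclude $\beta_1(k)\geq R\,\beta_1(k-1)$ and $\zeta_1(k)\leq (2+R)\,\zeta_1(k-1)$, whence $\bar{w}_k\geq \frac{R}{R+2}\bar{w}_{k-1}=\frac{p-q}{p+q}\bar{w}_{k-1}$. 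You instead stay with the first-order vector recurrence, use the top rows to reduce the claim to your single-time inequality $(\star)$, and prove $(\star)$ from two structural facts established by induction: the gap identity $\beta_{d-1}(j)=\beta_1(j)+\frac{nq}{2}\beta_{d-2}(j-1)$ and the componentwise ordering $\zeta_1(k)\geq\zeta_2(k)\geq\cdots\geq\zeta_{d-1}(k)$. Both arguments ultimately hinge on the same cancellation $(p+q)-2q=p-q$, but your $\zeta$-monotonicity lemma plays the role that non-negativity of the scalar sequences plays in the paper, and it is a structural fact the paper never isolates. What your version buys: it is more elementary (no appeal to characteristic polynomials or Cayley--Hamilton), and it is valid uniformly for all $k\geq 1$, since the one-step reduction only looks back a single time index; by contrast, the paper's scalar recurrences need $d-1$ past values and hence, as written, only apply once $k\geq d-1$, so strictly speaking the small steps $1\leq k<d-1$ require a separate (easy, but omitted) check there. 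What the paper's route buys is brevity once the scalar recurrences are in hand, and the same characteristic-polynomial machinery is reused in its large-$d$ asymptotic analysis of the eigenvalues, so it serves double duty.
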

Combining the above result with that of Theorem~\ref{MAINTHM:CE} reveals that the distance between the centroids of the tensor RWoHs is greater than that of the clique-expansion RWoHs whenever $d\geq 3$. Applying a telescoping sum on the result of Theorem~\ref{cor:HgeqCE} also produces the following bound
$$\bar{w}_k \geq  \frac{2}{n}\left( \frac{p-q}{p+q}\right )^k.$$
Comparing this bound to the result of Theorem~\ref{MAINTHM:CE}, one can observe that the centroid distance $\bar{w}_k$ of the tensor RWoHs decays slower than that of the clique-expansion RWoHs with increasing $k$, and that the centroid distance of LPs of tensor RWoHs is larger than that of clique-expansion RWoHs.
We defer the proof of the results to Supplement~\ref{sec:pfthm3} and instead present simulation results in Figure~\ref{fig:simul1}.
\begin{figure}[!htb]
\centering
  \includegraphics[width=0.49\linewidth,height=0.3\linewidth]{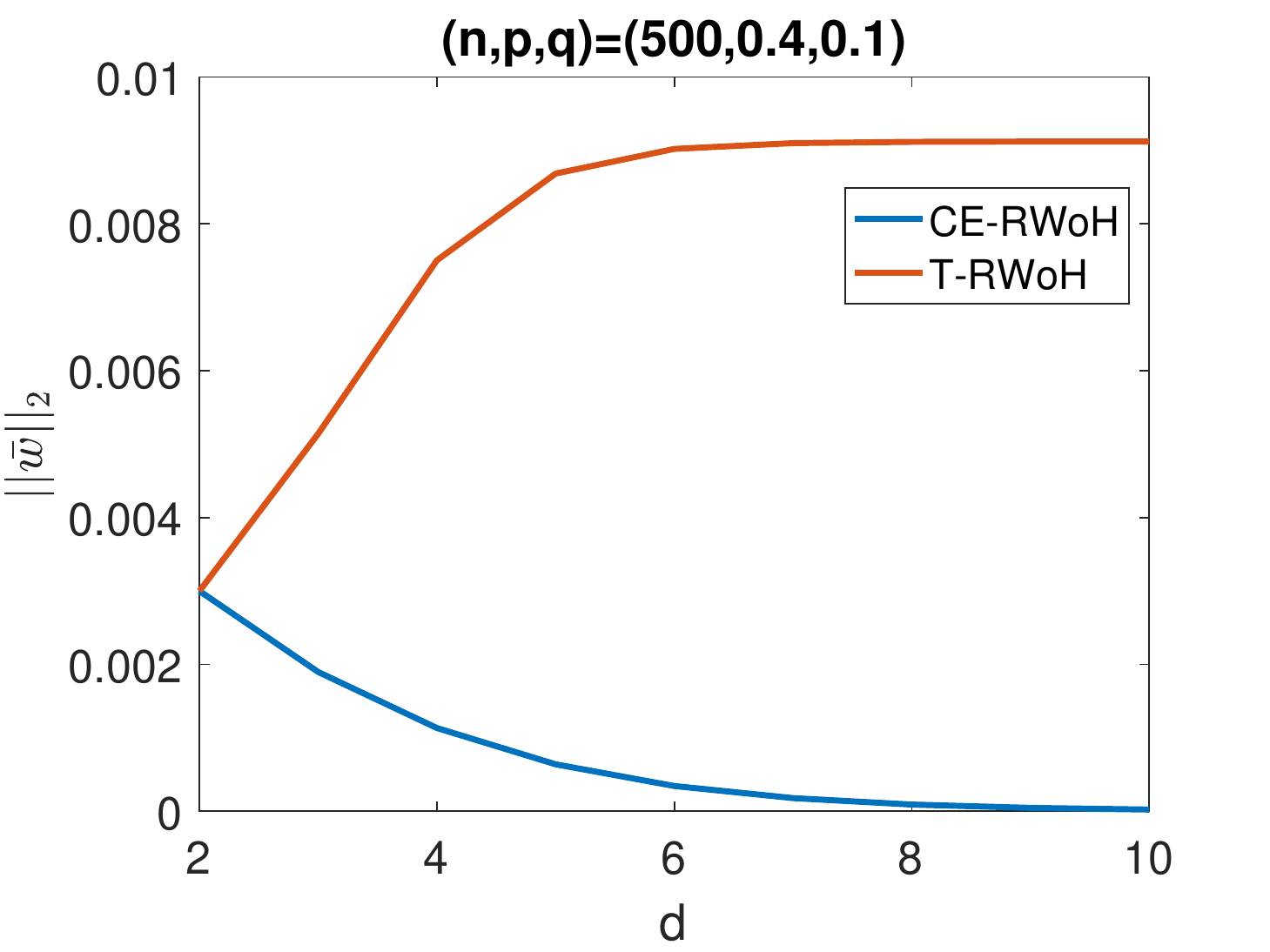}
  \includegraphics[width=0.49\linewidth,height=0.3\linewidth]{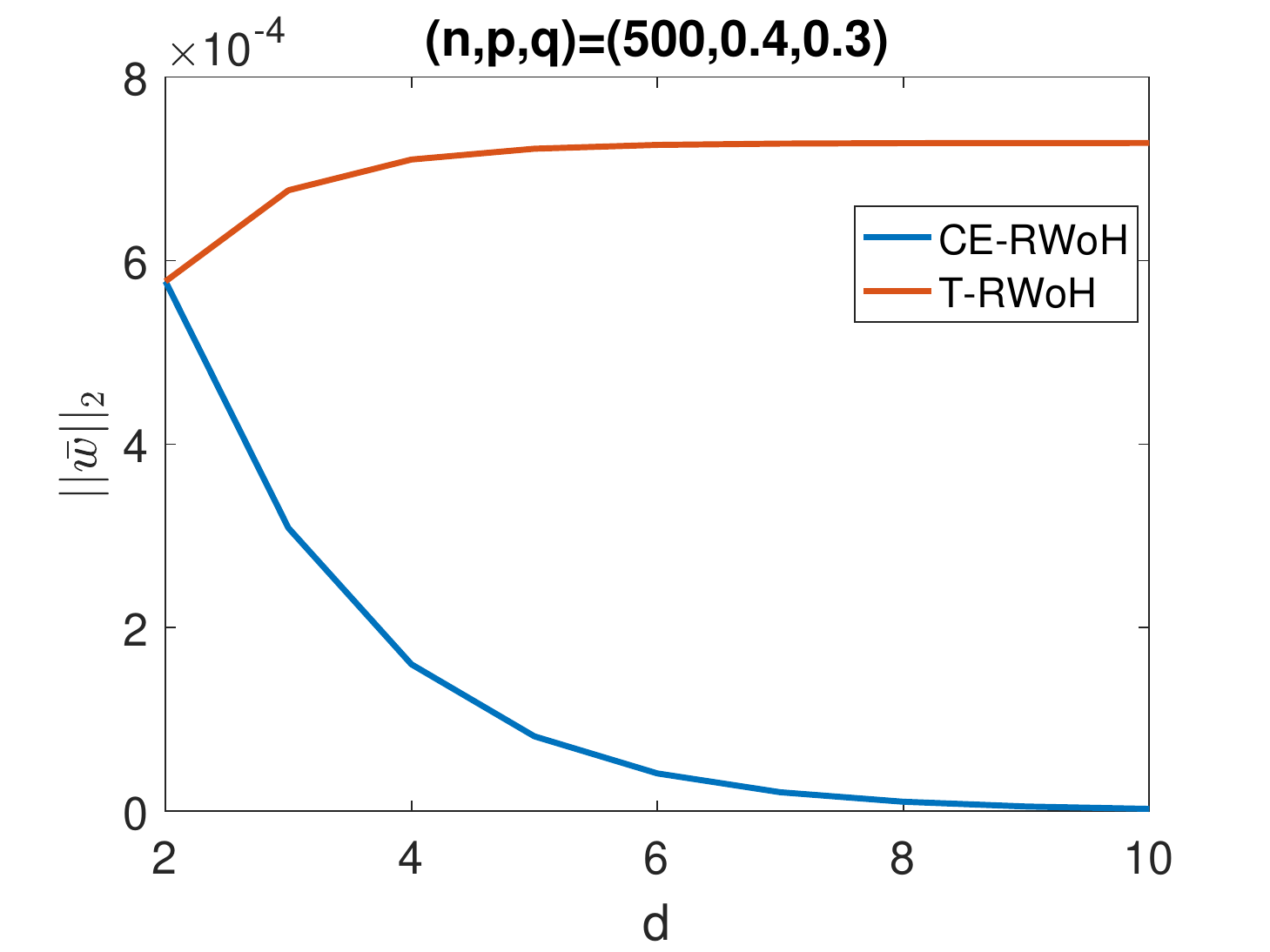}
  \vspace{-0.7cm}
  \caption{Centroid distances for the two studied RWoHs. We ran $20$ steps of the random walk and used Theorems~\ref{MAINTHM:CE},~\ref{thm:abapprox} to calculate the centroid distance $||\bar{w}||_2 = \sqrt{\sum_{k=1}^{20}\bar{w}_k^2}$. %For $d=2$, we set the centroid distance of tensor RWoH to that corresponding to the clique-expansion RWoH, as our tensor results only hold for $d\geq 3$.
  }\label{fig:simul1}
  \vspace{-0.3cm}
\end{figure}
\begin{figure}[!htb]
\centering
  \includegraphics[width=0.31\linewidth]{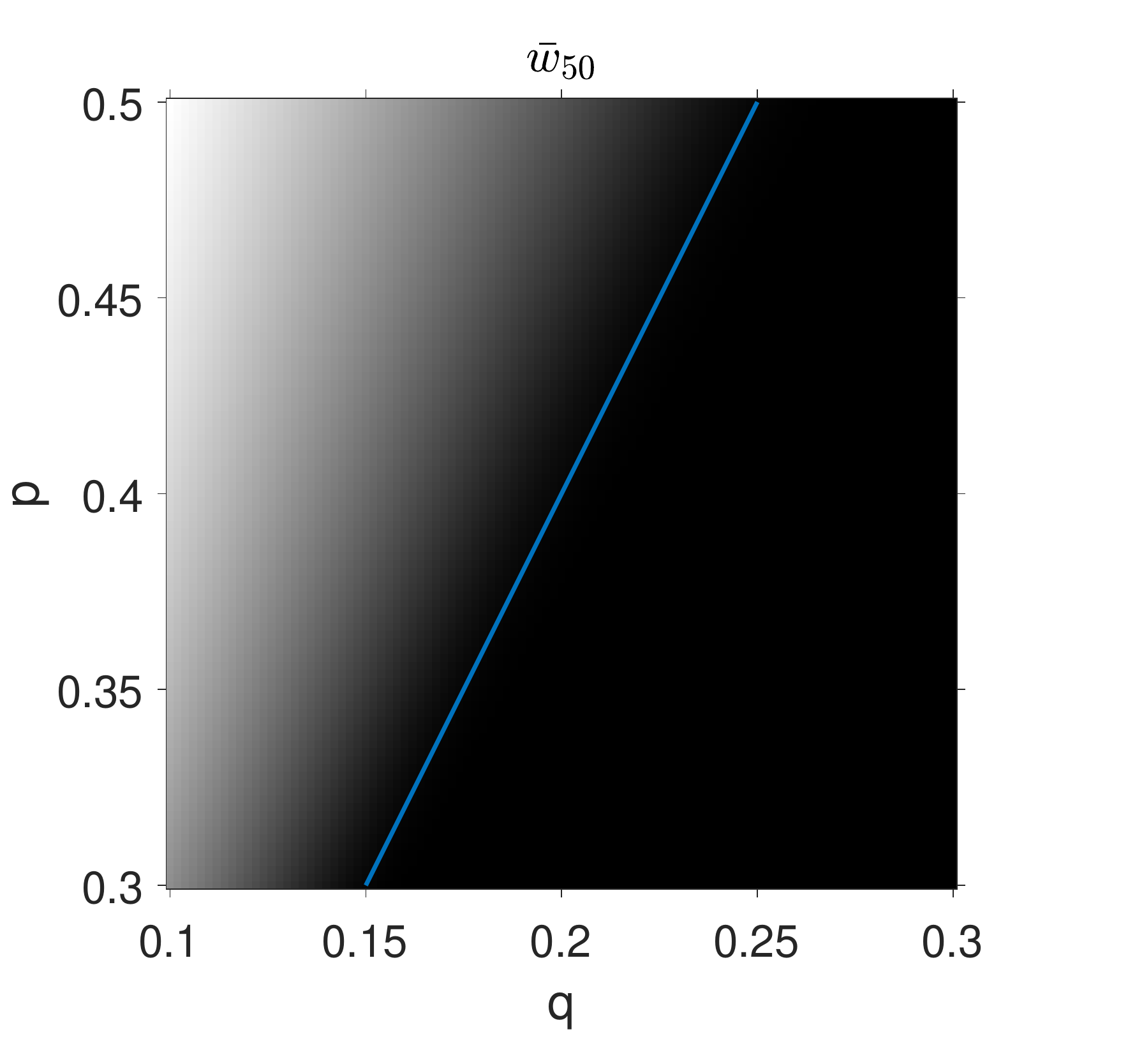}
  \includegraphics[width=0.33\linewidth]{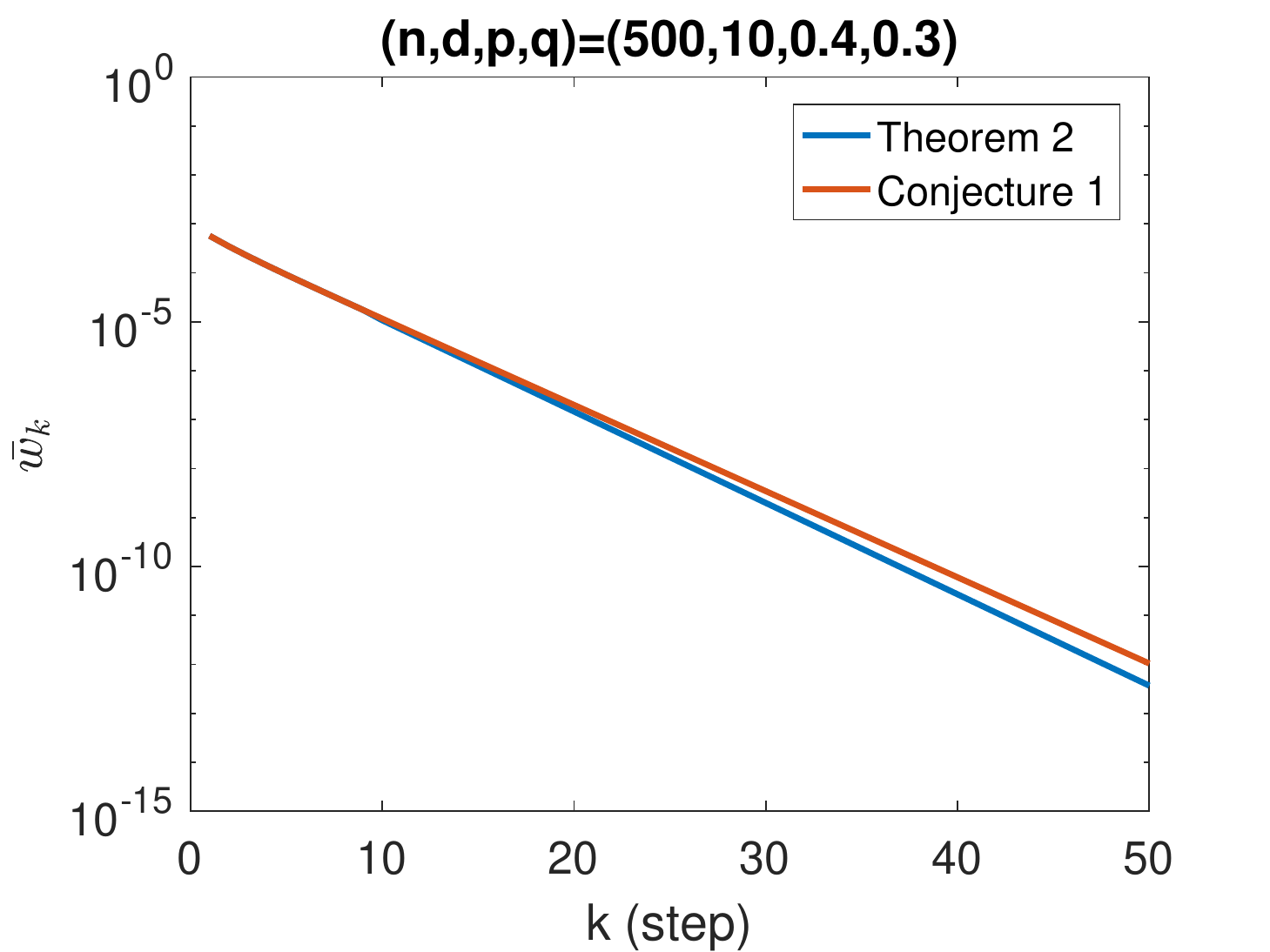}
  \includegraphics[width=0.33\linewidth]{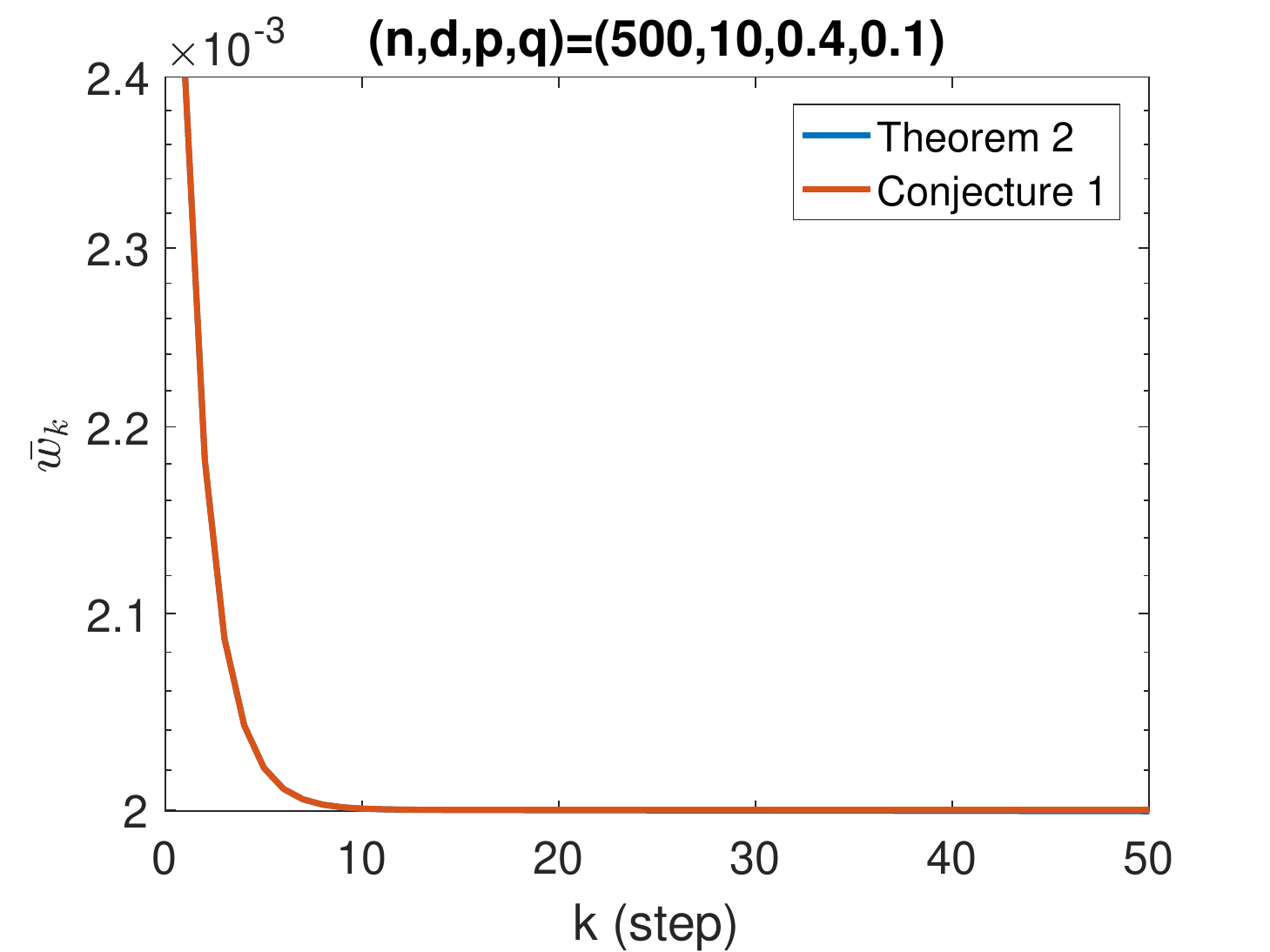}
  \vspace{-0.7cm}
  \caption{The phase transition following from Theorem~\ref{thm:abapprox}. (Right) The gray scale captures the magnitude of $\bar{w}_{50}$ for different values of $(p,q)$. The darker the shade, the smaller the $\bar{w}_{50}$; the separating line is $p=2q$. (Middle, Left) The decay of $\bar{w}_{k}$ for $(p,q) = (0.4,0.3)$ and $(0.4,0.1)$, respectively. }\label{fig:simul2}
  \vspace{-0.3cm}
\end{figure}
%\textcolor{blue}{Is there some intuition on the above result. Please either add proof or simulation showing the relation to CE.} \textcolor{red}{Left for the proofs from Pan.}

%\textcolor{red}{
%In order to prove Theorem~\ref{thm:d3case} \textcolor{blue}{Not prove Thm2 but to derive the the closed form of $\zeta_1(k), \beta_1(k)$. Please be specific} , we need to obtain the close form of eigenvalues of matrices of size $(d-1)\times (d-1)$. \textcolor{blue}{(Please give a smooth introduction why the problem reduces to solving a polynomial)}. It is equivalent to solve the characteristic polynomial of degree $d-1$. However, by Abel - Ruffini theorem \textcolor{blue}{reference} we know that there is no algebraic solution for general polynomials of degree greater or equal to $5$. Hence we can only generalize our Theorem~\ref{thm:d3case} directly for the case $d \leq 5$. \textcolor{blue}{Please say it from a positive angle.}}
%\textcolor{blue}{Explain $d = 4, 5$ case. $d = 6$ is not possible because ... }  .
Although there may not be a general closed form characterization for the centroid distance when $d\geq 6$, we may still obtain some simple approximation results for the case that $d$ is sufficiently large by analyzing the characteristic polynomials of $\beta,\zeta$. For sufficiently large $d$ %and $|\frac{p-q}{q}-1|>\epsilon$ for some constant $\epsilon>0$
, we have the following approximation for $\bar{w}_k$, $\frac{2}{n}\frac{C_3p^k}{C_1(2q)^k+C_2p^k}$,
where $C_1,C_2$ and $C_3$ are constants independent of $k$. We describe how this heuristic naturally arises from the characteristic polynomial of the recurrence and how it is supported by extensive simulations in Supplement~\ref{app:thm2larged}.

We also observe that there exists a phase transition at $p = 2q$, illustrated in Figure~\ref{fig:simul2}. When $p>2q$, we have $\bar{w}_k \rightarrow \frac{2}{n} \frac{C_3}{C_2}$. This implies that the centroid distance does not diminish when increasing the step size $k$. Thus, for this parameter setting the tensor RWoHs behaves fundamentally different from the clique-expansion RWoHs (see Theorem~\ref{MAINTHM:CE}). On the other hand, if $q<p<2q$ then $\bar{w}_k$ decays roughly geometrically similarly to what is proved in Theorem~\ref{MAINTHM:CE}. We conjecture that the constants are as listed below.
\begin{conjecture}\label{conj:1}
    % Assume that $p>q$, $|\frac{p-q}{q}-1|>\epsilon$ for some constant $\epsilon>0$ and that $d$ is sufficiently large. For the same initialization $y^{(0)}$ as described in Theorem~\ref{thm:abapprox} we have
    \begin{align*}
        & C_1 = \frac{-q}{p-2q},\;C_2 = \frac{p-q}{p-2q},\;C_3 = \frac{p-q}{p}.
    \end{align*}
\end{conjecture}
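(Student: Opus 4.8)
The plan is to interpret the constants $C_1,C_2,C_3$ as the limiting (as $d\to\infty$) spectral coefficients that govern the two scalar sequences $\beta_1(k)$ and $\zeta_1(k)$ of Theorem~\ref{thm:abapprox}, and to extract them through a generating-function/residue analysis. Denote by $N$ and $M$ the square matrices appearing (without the $\tfrac{n}{2}$ prefactor) in \eqref{thmeq:beta} and \eqref{thmeq:zeta}; they differ only in the corner entry, which is $0$ for $N$ and $2q$ for $M$. Writing $B(z)=\sum_{k\geq 0}\beta_1(k)z^k=\psi_1$ and $Z(z)=\sum_{k\geq 0}\zeta_1(k)z^k=\phi_1$, where $(I-z\,\tfrac{n}{2}N)\psi=\tfrac{4}{n^2}\mathbf{1}$ and $(I-z\,\tfrac{n}{2}M)\phi=\tfrac{4}{n^2}\mathbf{1}$, both are rational functions whose poles are the reciprocals of the eigenvalues. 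Because the matrices are banded (a $q$-subdiagonal, a constant last column $p-q$, and the distinguished corner), these resolvent systems can be solved in closed form: all coordinates are geometric sums in the ratio $a=\tfrac{n}{2}qz$, and eliminating the last coordinate gives explicit rational expressions for $B$ and $Z$.

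The first key step is to reduce the characteristic equations to the factored forms
\[
\lambda^{d-1}(\lambda-p)=-(p-q)q^{d-1},\qquad \mu^{d-2}(\mu-p)(\mu-2q)=(p-q)q^{d-1},
\]
obtained by clearing denominators in the solved resolvents (a spurious factor $\lambda-q$, resp.\ $\mu-q$, introduced here is divided out). As $d\to\infty$ the right-hand sides are exponentially small relative to $p^{d-1}$, so the roots localize: $B$ has a single dominant pole whose eigenvalue tends to $p$, while $Z$ has two dominant poles with eigenvalues tending to $p$ and $2q$ (these are precisely the factors $(\mu-p)$ and $(\mu-2q)$). A Rouch\'e argument confines the remaining $O(d)$ roots of each equation to an annulus of radius $q+o(1)$; since $q<p$ and $q<2q$, they are genuinely subdominant and their contributions are smaller by a factor $(q/p)^k$ or $(q/2q)^k$. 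This justifies the one-term form $\beta_1(k)\approx(\tfrac{n}{2})^k\tfrac{4}{n^2}C_3\,p^k$ and the two-term form $\zeta_1(k)\approx(\tfrac{n}{2})^k\tfrac{4}{n^2}\big(C_1(2q)^k+C_2 p^k\big)$, so that the common prefactor cancels in $\bar w_k=\tfrac{2}{n}\beta_1(k)/\zeta_1(k)$ and leaves exactly the conjectured functional form.

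The second key step is to evaluate the residues at the dominant poles in the $d\to\infty$ limit. At the pole with eigenvalue $\to p$ the terms $a^{d-2}$ in the geometric sums vanish, the sums collapse, and differentiating the denominator yields the residue $\tfrac{4}{n^2}\tfrac{p-q}{p}$ for $B$ and $\tfrac{4}{n^2}\tfrac{p-q}{p-2q}$ for $Z$, giving $C_3=\tfrac{p-q}{p}$ and $C_2=\tfrac{p-q}{p-2q}$ (the extra regular factor $\tfrac{1}{1-2qt}\to\tfrac{p}{p-2q}$ in $Z$, with $t=\tfrac{n}{2}z$, accounts for the difference). The coefficient $C_1$ is the delicate one: the pole with eigenvalue $\to 2q$ is a \emph{moving} pole that approaches the fixed singularity $1-2q\,\tfrac{n}{2}z=0$ of the resolvent at rate $\sim 2^{-d}$. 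Near this point the term $\tfrac{a^{d-2}}{1-2qt}$ in the solved resolvent is $0/0$ in the limit, and its careful expansion produces a finite contribution that combines with the vanishing factor to give the residue $\tfrac{4}{n^2}\tfrac{-q}{p-2q}$, i.e.\ $C_1=\tfrac{-q}{p-2q}$ (the sign is exactly what keeps $\zeta_1(k)$ positive in the regime $p<2q$ where the $(2q)^k$ mode dominates). Controlling this double limit — the simultaneous convergence of the pole location to the resolvent's singularity and of its residue — is the main obstacle; a naive substitution $\mu=2q$ instead gives the incorrect value $p/q$. Once this cancellation is handled, the remainder is routine geometric-sum algebra together with the Rouch\'e bound on the subdominant spectrum, establishing the values in Conjecture~\ref{conj:1}.
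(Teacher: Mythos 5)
First, a framing point: the paper does not actually prove this statement --- it is stated as a conjecture, and the paper's own support for it is the heuristic in Supplement~\ref{app:thm2larged}, which derives only the \emph{functional form} $\frac{2}{n}\frac{C_3p^k}{C_1(2q)^k+C_2p^k}$ by localizing the roots of the characteristic polynomials of \eqref{thmeq:beta} and \eqref{thmeq:zeta} for large $d$ (one dominant root tending to $p$ for $\beta$; two, tending to $p$ and $2q$, for $\zeta$), leaving the specific constants to simulations (Figure~\ref{fig:simul2}). Your proposal therefore goes genuinely beyond the paper, and its computational core checks out. Solving the banded resolvents by geometric sums gives, with $c=\frac{4}{n^2}$, $a=\frac{n}{2}qz$, $b=\frac{n}{2}(p-q)z$,
\begin{align*}
B(z)=\frac{c(1-a)}{1-a-b(1-a^{d-1})},\qquad
Z(z)=\frac{c}{(1-2a)\left(1-b\left[\frac{1-a^{d-2}}{1-a}+\frac{a^{d-2}}{1-2a}\right]\right)},
\end{align*}
clearing denominators reproduces exactly your factored characteristic equations (equivalently the paper's, under the rescaling $\lambda=qt$ after multiplying by the spurious factor $t-1$), and the $d\to\infty$ limits on $|a|<1$ are the rational functions $B_\infty(z)=\frac{c\left(1-\frac{n}{2}qz\right)}{1-\frac{n}{2}pz}$ and $Z_\infty(z)=\frac{c\left(1-\frac{n}{2}qz\right)}{(1-nqz)\left(1-\frac{n}{2}pz\right)}$, whose partial-fraction decompositions yield precisely $C_3=\frac{p-q}{p}$, $C_2=\frac{p-q}{p-2q}$, $C_1=\frac{-q}{p-2q}$, i.e., the conjectured values. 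Two refinements to your sketch: (i) the ``moving pole'' you single out as the main obstacle is in fact benign once you work with these closed forms --- both retained poles satisfy $|a|\le\max\left(\frac12,\frac qp\right)<1$, the convergence $Z_d\to Z_\infty$ is uniform on a circle $|a|=r$ with $\max\left(\frac12,\frac qp\right)<r<1$ (error $O(r^{d-2})$), so Cauchy coefficient extraction on that circle delivers the constants without tracking the $O(2^{-d})$ displacement of the pole toward $1-nqz=0$; your direct expansion of that double limit is consistent (I verified the residue indeed tends to $c\,\frac{-q}{p-2q}$) but unnecessary; (ii) your Rouch\'e step controls only the \emph{locations} of the $\Theta(d)$ subdominant roots, not the sum of their residue contributions, which could a priori grow with $d$ --- the contour estimate in (i) supplies this bound automatically and makes precise that the approximation is asymptotic in $k$ (it visibly fails at $k=0$, where $\bar w_0=\frac2n$ exactly while the formula gives $\frac2n\frac{p-q}{p}$). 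Finally, state $p\neq 2q$ explicitly: at the phase-transition point the two $\zeta$-poles merge and all three constants degenerate, exactly where the paper's Figure~\ref{fig:simul2} places the transition.
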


Figure~\ref{fig:simul2} also shows that for $p>2q$ the centroid distance decays very slowly with $k$ and that the difference between our conjectured behavior and the result of the pertinent theorem is very small.

The next result shows that the empirical centroids asymptotically concentrate around their mean-fields.
%\textcolor{red}{This result also pertains only to d=3, please generalize} \textcolor{olive}{Done.}
\begin{lemma}\label{lma:Hconcentrate}
    Assume that $G$ is sampled from a $d$-hSBM$(n,p,q)$ model, for some constant $d\geq 3$. Let the LPs of the tensor RWoHs be $x_{v;t}^{(k)}$ and assume that $\frac{nq^2}{\log n}\rightarrow \infty$.
    For sufficiently large $n$, any constant $\epsilon>0$ and a bounded constant $k$,
     \begin{align*}
     &\frac{2}{n}\left (\sum_{v\in V_0}x_{v;t}^{(k)} - \sum_{v\in V_1}x_{v;t}^{(k)} \right )\\
     &\in [(1-\epsilon)(\bar{a}^{(k)}-\bar{b}^{(k)}),(1+\epsilon)(\bar{a}^{(k)}-\bar{b}^{(k)})],
    \end{align*}
    with probability at least $1-o(1)$.
\end{lemma}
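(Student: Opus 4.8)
The plan is to write the target quantity as a ratio of two aggregate sums and to prove that numerator and denominator separately concentrate around their mean-field values. Define $D^{(k)} \eqDef \|y_t^{(k)}\|_1 = \sum_{v_1,\dots,v_{d-1}} y^{(k)}_{v_1,\dots,v_{d-1};t}$ and, for $i\in\{0,1\}$, $N_i^{(k)} \eqDef \sum_{v\in V_i}\sum_{v_1,\dots,v_{d-2}} y^{(k)}_{v_1,\dots,v_{d-2},v;t}$, so that $\sum_{v\in V_0}x_{v;t}^{(k)} - \sum_{v\in V_1}x_{v;t}^{(k)} = (N_0^{(k)}-N_1^{(k)})/D^{(k)}$ with $D^{(k)} = N_0^{(k)}+N_1^{(k)}$; write $\bar N_i^{(k)},\bar D^{(k)}$ for the analogues obtained by replacing $y$ with $\bar y$. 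Unrolling the recurrence~\eqref{eq:rw_HG} from the seed tuple shows that each of $N_0^{(k)}, N_1^{(k)}, D^{(k)}$ is a degree-$k$ polynomial in the independent Bernoulli entries $\{A_{v_1,\dots,v_d}\}$, namely a sum over length-$k$ walks of the product of the $k$ hyperedge indicators traversed. Because $k$ is a bounded constant and the mean-field ratio $(\bar N_0^{(k)} - \bar N_1^{(k)})/\bar D^{(k)}$ equals $\tfrac n2(\bar a^{(k)}-\bar b^{(k)})$ by Theorem~\ref{thm:abapprox}, it suffices to establish: (i) $\E[N_i^{(k)}] = \bar N_i^{(k)}(1+o(1))$ and $\E[D^{(k)}]=\bar D^{(k)}(1+o(1))$; (ii) each of $N_0^{(k)}-N_1^{(k)}$ and $D^{(k)}$ concentrates around its expectation with relative error $o(1)$; and (iii) the ratio of two such concentrated quantities concentrates, using that the normalized $D^{(k)}$ stays bounded away from $0$.

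For step (i) I would expand the expectation over walks. Walks whose $k$ traversed hyperedges are all distinct contribute exactly their mean-field weight, since the corresponding Bernoullis are independent; this reproduces $\bar N_i^{(k)}$ and $\bar D^{(k)}$. The only discrepancy between $\E$ and the mean-field comes from walks that reuse a hyperedge (so that an $A^2=A$ collapse occurs), and such a walk requires a coincidence among its $k$ free vertices, costing at least one factor of $n$ in their count while gaining only a factor $1/q$ in weight. Hence their total contribution is an $O(1/(nq))$ fraction of the mean-field value, which is $o(1)$ since the hypothesis $nq^2/\log n\to\infty$ forces $nq\to\infty$. This is precisely the mechanism behind the remark that $\bar x^{(k)}_v \ne \E x^{(k)}_v$: the gap is present but asymptotically negligible at the level of the aggregated sums.

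For step (ii) I would bound the variance (and, to reach probability $1-o(1)$, a higher moment) of $D^{(k)}$ and of $N_0^{(k)}-N_1^{(k)}$. Writing the variance as a double sum over pairs of walks, only pairs sharing at least one hyperedge contribute, since disjoint walks use independent Bernoullis. The dominant contribution comes from pairs sharing a single hyperedge; counting these using the layered walk structure from a fixed seed tuple---and organizing the bookkeeping by the label pattern $(\sigma(v_1),\dots,\sigma(v_{d-1}))$ as in the state reduction underlying Theorem~\ref{thm:abapprox}---shows that the variance is smaller than the squared mean by a factor $O(1/(nq))$. Upgrading variance control to a $1-o(1)$ probability bound, either by a Bernstein/bounded-difference estimate applied to the one-step updates or by a moment computation, is where the stronger hypothesis $nq^2/\log n\to\infty$ is consumed, the $\log n$ accounting for the union bound over the states touched at each of the $k$ steps.

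Step (iii) is then routine: since $D^{(k)} = N_0^{(k)}+N_1^{(k)}$ concentrates around $\bar D^{(k)}$, which after the $\tfrac2n$-normalization is bounded away from $0$, and $N_0^{(k)}-N_1^{(k)}$ concentrates around $\bar N_0^{(k)}-\bar N_1^{(k)} = \Theta(\bar D^{(k)})$ for bounded $k$, the ratio concentrates around $(\bar N_0^{(k)}-\bar N_1^{(k)})/\bar D^{(k)}$ with relative error $o(1)$, giving the claimed interval after multiplying by $\tfrac2n$. I expect the main obstacle to be step (ii): unlike the clique-expansion case of Lemma~\ref{lma:concentrate}, where one concentrates the $n\times n$ matrix $\mathbf{A}^{(ce)}$ entrywise, here the state space has dimension $O(n^{d-1})$ and the walk reuses a single random tensor across all $k$ steps, so overlapping walk pairs are genuinely correlated. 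The essential difficulty is therefore the combinatorial count of shared-hyperedge walk pairs, kept relative to the (comparatively small) signal $N_0^{(k)}-N_1^{(k)}$ rather than to $D^{(k)}$ alone; exploiting the label-pattern symmetry to compress this count to the $d-1$ reduced states is what makes it tractable.
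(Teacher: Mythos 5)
Your strategy is sound, but it is genuinely different from the paper's proof. The paper never touches walk-pair combinatorics: it first shows, via Hoeffding's inequality plus a union bound over all $n^{d-1}$ tuples $(v_1,\dots,v_{d-1})$ and the two communities of the last index, that every fiber sum $\sum_{l\in V_r}A_{v_1,\dots,v_{d-1},l}$ simultaneously lies within a $(1\pm\gamma)$ factor of its expectation; conditioned on this event, it then propagates the multiplicative error deterministically through the $k$ steps by induction, exactly as in the proof of Lemma~\ref{lma:concentrate}, so the aggregated $y$-sums stay within $(1\pm\gamma)^k$ of the mean-field recurrence and the normalized ratios stay within $\left(\frac{1+\gamma}{1-\gamma}\right)^k\le 1+\epsilon$. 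That conditioning sidesteps both of the points you flag as delicate: the correlation caused by reusing one random tensor across all $k$ steps (harmless because the conditioning event is a property of the tensor alone, and the per-step bounds compose deterministically), and the discrepancy $\bar{y}\neq\E y$ (the random iterates are compared directly to the mean-field recurrence, so $\E y^{(k)}$ never enters). Your route instead expands $N_i^{(k)}$ and $D^{(k)}$ as degree-$k$ walk polynomials and runs a first/second-moment method; for bounded $k$ and $d$ this works, and it buys something the paper's argument does not: Chebyshev applied to the two scalar aggregates needs no union bound, so relative variance $O(1/(nq))$ already yields probability $1-o(1)$ under the weaker hypothesis $nq\to\infty$. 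This also exposes a small inaccuracy in your sketch: in a pure moment method the $\log n$ in the assumption is not ``consumed by a union bound over the states touched at each step''---no such union bound is needed; the $\log n$ is precisely what the paper's union bound over $n^{d-1}$ fibers pays for, and your approach simply does not require it. The price of your route is the step you correctly identify as the hard one: the combinatorial count of shared-hyperedge walk pairs (including overlaps at different step indices and multiset coincidences forced by the supersymmetry of $\mathbf{A}$), which the paper's conditional induction avoids entirely.
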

The proof of Lemma~\ref{lma:Hconcentrate} is deferred to Supplement~\ref{app:pflma2}. The above results show that if one sets $\gamma_k = \bar{w}_k$ in the GPR formulation, it will asymptotically approach the geometric discriminant function or a tight approximation thereof. Independent on the choice of the parameter $\alpha$, the geometric discriminant function of PPR does not match that of the tensor RWoHs on $d$-hSBM. Only the choice of $\gamma_k$ suggested by our analysis will allow the aforementioned result to hold true.

%\textcolor{red}{The CE proof is not that interesting, maybe you can move it to the Appendix?} \textcolor{olive}{Agree.}
\vspace{-0.1cm}
\section{Proofs}\label{sec:T-RWoH}
\vspace{-0.1cm}
Due to space limitations, we relegate all relevant proofs pertaining to the clique-expansion method to the Supplement and exclusively focus on the tensor case.

The main technical difficulty associated with tensor RWoHs is the large size of the state space, equal to $2^{d-1}$, which makes an analysis akin to the one described in Theorem~\ref{MAINTHM:CE} difficult. The main finding of this section is that for the $d$-hSBM, the centroid distances are governed by small recurrences involving only $d-1$ states. The proof supporting this observation comprises two steps, the first step of which is similar to the proof of Theorem~\ref{MAINTHM:CE}. The second step of the proof describes how to reduce the state space of the recurrence. For simplicity, we start with $d=3$ and then generalize the analysis for arbitrary $d$. For notational convenience, we write $\mathbf{Y}^{(k)} = [Y_1^{(k)},Y_2^{(k)},Y_3^{(k)},Y_4^{(k)}]^T$.

%\textcolor{blue}{The following theorem overlaps with theorem 3 and can be removed. This part only contains proof of theorem 3.}
%First we show that $\bar{y}_{i,j;t}^{(k)}$ can be represented by a $2^{3-1} = 4$ states of recurrence relations, which can be viewed as a variant of Theorem~\ref{MAINTHM:CE} for the tensor RWoH case.
\begin{theorem}\label{thm:post123456}
    Let $G$ be sampled from $3$-hSBM$(n,p,q)$ and let the tensor RWoHs be associated with an initial vector $y^{(0)}_{s_1,s_2; t} = 1$ and $y^{(0)}_{v_1,v_2;t} = 0,\;\forall (v_1,v_2)\neq (s_1,s_2)$, where $s_1$ and $s_2$ are selected independently and uniformly at random from $V_0$. Furthermore, let $\bar{y}^{(0)}_{t} = \E y^{(0)}_{t}$. Then for all $k\geq 0$
    \begin{align}
        \bar{y}_{i,j;t}^{(k)} =
        \begin{cases}
            % Y_1^{(c)} & \text{if } (i,j) \in S\times S\\
            % Y_2^{(c)} & \text{if } (i,j) \in S\times V_a\\
            % Y_3^{(c)} & \text{if } (i,j) \in S\times V_b\\
            % Y_4^{(c)} & \text{if } (i,j) \in V_a\times S\\
            Y_1^{(k)} & \text{if } (i,j) \in V_0\times V_0,\\
            Y_2^{(k)} & \text{if } (i,j) \in V_0\times V_1,\\
            % Y_7^{(c)} & \text{if } (i,j) \in V_b\times S\\
            Y_3^{(k)} & \text{if } (i,j) \in V_1\times V_0,\\
            Y_4^{(k)} & \text{if } (i,j) \in V_1\times V_1,\\
            % 0 & \text{otherwise}
        \end{cases}
    \end{align}
where $\mathbf{Y}^{(0)} = \frac{4}{n^2}[1,0,0,0]^T$ and
    \begin{align*}%\label{eq:MFA2MCposteq1}
        &\begin{bmatrix}
            Y_1^{(k+1)}\\
            Y_2^{(k+1)}\\
            Y_3^{(k+1)}\\
            Y_4^{(k+1)}\\
        \end{bmatrix}
        = %\frac{1}{N^{(k)}}
        \begin{bmatrix}
        \frac{np}{2} & 0 & \frac{nq}{2} & 0\\
        \frac{nq}{2} & 0 & \frac{nq}{2} & 0\\
        0 & \frac{nq}{2} & 0 & \frac{nq}{2}\\
        0 & \frac{nq}{2} & 0 & \frac{np}{2}
        \end{bmatrix}
        \begin{bmatrix}
            Y_1^{(k)}\\
            Y_2^{(k)}\\
            Y_3^{(k)}\\
            Y_4^{(k)}
        \end{bmatrix}.
        % &
        % \begin{bmatrix}
        %     Y_1^{(0)}\\
        %     Y_2^{(0)}\\
        %     Y_3^{(0)}\\
        %     Y_4^{(0)}
        % \end{bmatrix}
        % = \frac{4}{n^2}
        % \begin{bmatrix}
        %     1\\
        %     0\\
        %     0\\
        %     0
        % \end{bmatrix}.\nonumber
    \end{align*}
\end{theorem}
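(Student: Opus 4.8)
The plan is to prove the statement by induction on $k$, establishing first the block-constancy of the mean-field vector $\bar{y}^{(k)}_t$ and then reading off the transition matrix by direct bookkeeping of the order-$2$ Markov recurrence \eqref{eq:rw_HG_MF} specialized to $d=3$. The central structural fact is that the mean-field dynamics are invariant under any label-preserving permutation of $V$: since $\E A_{v_1,v_2,v_3}$ depends on $(v_1,v_2,v_3)$ only through the labels $(\sigma(v_1),\sigma(v_2),\sigma(v_3))$, the recurrence commutes with permutations fixing $V_0$ and $V_1$ setwise. Hence $\bar{y}^{(k)}_{i,j;t}$ can depend on $(i,j)$ only through $(\sigma(i),\sigma(j))$, which is exactly the claim that it takes one of the four values $Y_1^{(k)},\dots,Y_4^{(k)}$ on the blocks $V_0\times V_0$, $V_0\times V_1$, $V_1\times V_0$, $V_1\times V_1$.

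For the base case I would compute $\bar{y}^{(0)}_{i,j;t}=\E y^{(0)}_{i,j;t}=\mathbb{P}(s_1=i,\,s_2=j)$. Because $s_1,s_2$ are drawn independently and uniformly from $V_0$ with $|V_0|=n/2$, this equals $4/n^2$ when $(i,j)\in V_0\times V_0$ and $0$ otherwise, giving $\mathbf{Y}^{(0)}=\frac{4}{n^2}[1,0,0,0]^T$ as stated.

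For the inductive step, assume $\bar{y}^{(k)}$ is block-constant. Substituting into \eqref{eq:rw_HG_MF} with $d=3$ gives $\bar{y}^{(k+1)}_{i,j;t}=\sum_{v_1}\bar{y}^{(k)}_{v_1,i;t}\,\E A_{v_1,i,j}$, and I would split the sum over $v_1\in V_0$ and $v_1\in V_1$. Within each half the summand is constant and there are $n/2$ terms, so each target value becomes a weighted combination of the $Y_\ell^{(k)}$ with weights $\tfrac{n}{2}p$ or $\tfrac{n}{2}q$ according to whether the triple $(\sigma(v_1),\sigma(i),\sigma(j))$ is monochromatic. Carrying this out for the four choices of $(\sigma(i),\sigma(j))$ reproduces the four rows of the transition matrix; for instance, for $(i,j)\in V_0\times V_0$ one gets the monochromatic weight $p$ from $v_1\in V_0$ (contributing $\tfrac{np}{2}Y_1^{(k)}$) and weight $q$ from $v_1\in V_1$ (contributing $\tfrac{nq}{2}Y_3^{(k)}$), i.e. the first row $[\tfrac{np}{2},0,\tfrac{nq}{2},0]$.

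The computation itself is routine; the two points requiring care are, first, the shared-index structure of the order-$2$ chain — the ``old'' pair is $(v_1,i)$ while the ``new'' pair is $(i,j)$, so the contracted index $v_1$ occupies the first slot of $\bar{y}^{(k)}$ and the summation must correctly pair the block value $Y_\ell^{(k)}$ with the label triple determining $\E A_{v_1,i,j}$ — and, second, the per-block vertex count, which uses $|V_0|=|V_1|=n/2$ (for odd $n$ the factors $n/2$ should be read as $\lceil n/2\rceil,\lfloor n/2\rfloor$ and affect only lower-order terms). I expect the main obstacle to be conceptual rather than computational: making the permutation-invariance argument clean enough that block-constancy is genuinely established rather than tacitly assumed. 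Once that is in place the transition matrix follows by inspection, mirroring the structure already used for the clique-expansion case in Theorem~\ref{MAINTHM:CE}.
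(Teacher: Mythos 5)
Your proposal is correct and follows essentially the same route as the paper's own proof: induction on $k$, where the inductive step substitutes the block-constant values $Y_\ell^{(k)}$ into the mean-field recurrence $\bar{y}^{(k+1)}_{i,j;t}=\sum_{v_1}\E A_{v_1,i,j}\,\bar{y}^{(k)}_{v_1,i;t}$, splits the sum over $v_1\in V_0$ and $v_1\in V_1$, and counts $\tfrac{n}{2}$ terms per block to read off the rows of the transition matrix. Your permutation-invariance remark and the handling of the base case are consistent extras, but the core argument (including the correct pairing of the old pair $(v_1,i)$ with the block value $Y_1^{(k)}$ or $Y_3^{(k)}$) is exactly what the paper does.
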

%Note that $N^{(c)}$ is nothing but the normalization for the step $c$.
\begin{proof} %\textcolor{blue}{Please first state the high-level logic and then go in each part.}
The proof proceeds by induction: The base case $k = 0$ is clearly true. For the induction step, assume that the hypothesis holds for $1,2,...,k$. Then
    \begin{align*}
      & \forall i,j\in V_0, \; Y_1^{(k+1)} = \bar{y}_{i,j;t}^{(k+1)} = \sum_{l=1}^{n}\E A_{lij}\bar{y}_{l,i;t}^{(k)} \\
      &= \sum_{l\in V_0}\E A_{lij}\bar{y}_{l,i;t}^{(k)}+\sum_{l\in V_1} \E A_{lij}\bar{y}_{l,i;t}^{(k)} \\
      & = \sum_{l\in V_0}\E A_{lij}Y_1^{(k)}+\sum_{l\in V_1}\E A_{lij}Y_3^{(k)} = \frac{np}{2}Y_1^{(k)}+\frac{nq}{2}Y_3^{(k)}.
    \end{align*}
Similar expressions may be derived for $Y_2^{(k+1)},Y_3^{(k+1)},Y_4^{(k+1)}$. This completes the proof.
    %Hence we have proved Theorem~\ref{thm:post123456}.
    %\small
    %\begin{align*}
    %    &\sum_{i,j\in[n]}D_{ij}\bar{y}_{ij}^{(c)} \\
    %    &= \frac{n^2}{4}(\frac{n(p+q)}{2}(Y_1^{(c)}+Y_4^{(c)}) + nq(Y_2^{(c)} + Y_3^{(c)})) = N^{(c)}
    %\end{align*}
    %\normalsize
    %So the recurrence relation of mean field post-normalized random walk becomes
    %\begin{align*}
    %    \bar{y}^{(c+1)}_{ij} = \frac{1}{N^{(c)}}\sum_{l = 1}^{n}\E A_{lij}\bar{y}_{li}^{(c)}
    %\end{align*}
    %Next we compute $\sum_{l = 1}^{n}\E A_{lij}\bar{y}_{li}^{(c)}$ directly.
    %\begin{align*}
    %    & \sum_{l = 1}^{n}\E A_{lij}\bar{y}_{li}^{(c)} = \sum_{l \in V_1}\E A_{lij}\bar{y}_{li}^{(c)} + \sum_{l \in V_2}\E A_{lij}\bar{y}_{li}^{(c)} \\
     %   &=
      %  \begin{cases}
      %       \frac{np}{2}Y_1^{(c)} + \frac{nq}{2}Y_3^{(c)} & \text{ if } (i,j)\in V_1\times V_1\\
      %       \frac{nq}{2}Y_1^{(c)} + \frac{nq}{2}Y_3^{(c)} & \text{ if } (i,j)\in V_1\times V_2\\
       %      \frac{nq}{2}Y_2^{(c)} + \frac{nq}{2}Y_4^{(c)} & \text{ if } (i,j)\in V_2\times V_1\\
      %       \frac{nq}{2}Y_2^{(c)} + \frac{np}{2}Y_4^{(c)} & \text{ if } (i,j)\in V_2\times V_2
    %    \end{cases}
    %\end{align*}
    %This exactly implies~\eqref{eq:MFA2MCposteq1}. Together we complete the proof.
\end{proof}
Next we show how to reduce the number of states to $d-1$. To this end, we simplify $\bar{x}_{i;t}$ as
\begin{align*}
  & \bar{x}_{j;h}^{(k)} = \frac{ \sum_{i = 1}^{n}\bar{y}_{i,j;h}^{(k)}}{ \sum_{i,l = 1}^{n}\bar{y}_{i,l;h}^{(k)}} =
  \begin{cases}
        \bar{a}^{(k)} = \frac{2}{n}\frac{Y_1^{(k)} + Y_3^{(k)}}{\sum_{m=1}^{4}Y_m^{(k)}}, & \mbox{if } j\in V_0 \\
        \bar{b}^{(k)} = \frac{2}{n}\frac{Y_2^{(k)} + Y_4^{(k)}}{\sum_{m=1}^{4}Y_m^{(k)}}, & \mbox{if } j\in V_1.
      \end{cases}
\end{align*}
The centroid distance $\bar{w}= \bar{a} - \bar{b}$ may be written as
\begin{align*}
  & \bar{w}_k= \bar{a}^{(k)} - \bar{b}^{(k)} = \frac{2}{n}\frac{\beta_1(k)}{\zeta_1(k)},\text{ where}\\
  & \beta_1(k) = [1,-1,1,-1]\mathbf{Y}^{(k)},\,
                            \zeta_1(k) = [1,1,1,1]\mathbf{Y}^{(k)}.
\end{align*}
% \begin{align*}

% \end{align*}
We introduce next the following auxiliary variables:
\begin{align*}
  & \beta_2(k) = [1,0,0,-1]\mathbf{Y}^{(k)},\,
                            \zeta_2(k) = [1,0,0,1]\mathbf{Y}^{(k)}.
\end{align*}
In the expression for $\beta_1(k)$, we replace $\mathbf{Y}^{(k)}$ by $\mathbf{Y}^{(k-1)}$ by invoking the recurrence of Theorem~\ref{thm:post123456}. One can then show that the recurrence for $\mathbf{Y}^{(k)}$ may be replaced by a recurrence for $\beta_1(k)$ and $\beta_2(k)$:
%The same approach  $\beta_2(k)$, resulting in the following recurrence relation:
% we have the follow
% Next note that the transition matrix for $Y$ in~\eqref{eq:MFA2MCposteq1} can be decompose as $\frac{n}{2}(H+E)$ where
% \begin{align*}
%     \mathbf{H} \eqDef \begin{bmatrix}
%         q\quad & 0\quad & q\quad & 0\\
%         q\quad & 0\quad & q\quad & 0\\
%         0\quad & q\quad & 0\quad & q\\
%         0\quad & q\quad & 0\quad & q
%     \end{bmatrix},\;\mathbf{E} \eqDef\begin{bmatrix}
%         p-q\quad & 0\quad & 0\quad & 0\\
%         0\quad & 0\quad & 0\quad & 0\\
%         0\quad & 0\quad & 0\quad & 0\\
%         0\quad & 0\quad & 0\quad & p-q
%     \end{bmatrix}.
% \end{align*}
% Then we have the following key observation
% \small
% \begin{align*}
%     &[1,-1,1,-1]\mathbf{H} = 0\times [1,-1,1,-1],\;[1,-1,1,-1]\mathbf{E} = (p-q)[1,0,0,-1]\\
%     &[1,0,0,-1]\mathbf{H} = q\times [1,-1,1,-1],\;[1,0,0,-1]\mathbf{E} = (p-q)[1,0,0,-1].
% \end{align*}
% \normalsize
% Now if we let $\beta_2(k) \eqDef [1,0,0,-1]Y$, then we can have the following recurrence relation
\begin{align}
    \begin{bmatrix}
        \beta_1(k+1)\\
        \beta_2(k+1)
    \end{bmatrix}
    = \frac{n}{2}
    \begin{bmatrix}
        0 & p-q\\
        q & p-q
    \end{bmatrix}
    \begin{bmatrix}
        \beta_1(k)\\
        \beta_2(k)
    \end{bmatrix}.
\end{align}
For $\zeta$, one can derive the following similar result:
\begin{align}
    \begin{bmatrix}
        \zeta_1(k+1)\\
        \zeta_2(k+1)
    \end{bmatrix}
    = \frac{n}{2}
    \begin{bmatrix}
        2q & p-q\\
        q & p-q
    \end{bmatrix}
    \begin{bmatrix}
        \zeta_1(k)\\
        \zeta_2(k)
    \end{bmatrix}.
\end{align}
% \textcolor{red}{(I don't know how to write it in a compact way)
% The proof is identical to the case of $d=3$. Then similarly we can decompose the matrix into $\mathbf{H} + \mathbf{E}$. Then define $\beta_1(k) = [1,-1,...,1,-1]Y^{(k)}$, $\beta_{d-1}(k) = [1,0,...,0,-1]Y^{(k)}$, $\beta_{d-2}(k) = [1,0,...,-1,1,0,...,-1]Y^{(k)}$... and so on. Similar to the case of $\zeta$. Then in general we have}
This approach generalizes for arbitrary $d$, but we defer the detailed analysis to Supplement~\ref{app:remainpfthm2}. Let $e_{i} = [1,0,...,0,-1]$ where there the runlength of zeros equals $2^{i}-2$.
Then, $\beta_i(k) = [e_i,e_i,...,e_i]Y^{(k)}$; a similar expression is valid for $\zeta$ with all $-1$s changed to $1$s. As for the case $d=3$, one can establish the following recurrence relations:
%It is not hard to see that in general we have \textcolor{blue}{I suggest to have a more detailed description here about which steps can be generalized in which way. Because things are not so obvious. }
\small
\begin{equation}\label{margsimp7}
	\begin{bmatrix}
		\beta_1(k)\\
        \vdots\\
        \beta_{d-1}(k)
	\end{bmatrix}
     = \frac{n}{2}
     \begin{bmatrix}
     	0 & \cdots & 0 & 0 & p-q\\
        q & 0 & \cdots & 0 & p-q\\
        0 & \ddots & \vdots & 0 & p-q\\
        0 & \cdots & q & 0 & p-q\\
        0 & \cdots & 0 & q & p-q\\
     \end{bmatrix}
     \begin{bmatrix}
		\beta_1(k-1)\\
        \vdots\\
        \beta_{d-1}(k-1)
	\end{bmatrix},
\end{equation}
\normalsize
and
\small
\begin{equation}\label{margsimp8}
	\begin{bmatrix}
		\zeta_1(k)\\
        \vdots\\
        \zeta_{d-1}(k)
	\end{bmatrix}
     = \frac{n}{2}
     \begin{bmatrix}
     	2q & \cdots & 0 & 0 & p-q\\
        q & 0 & \cdots & 0 & p-q\\
        0 & \ddots & \vdots & 0 & p-q\\
        0 & \cdots & q & 0 & p-q\\
        0 & \cdots & 0 & q & p-q\\
     \end{bmatrix}
     \begin{bmatrix}
		\zeta_1(c-1)\\
        \vdots\\
        \zeta_{d-1}(c-1)
	\end{bmatrix}.
\end{equation}
\normalsize
This complete the proof of Theorem~\ref{thm:abapprox}.
%Hence we complete the proof.
\vspace{-0.1cm}
\section{Construction of GPR based on landing probabilities}\label{sec:CoGPR}
\vspace{-0.1cm}
%\textcolor{blue}{Too sketchy...  Please give more formal description of the setting-ups. Again, assume you are telling a story so be careful about the transition. Also give a more formal introduction of Fisher. Why and when it can be important? And give an suggestion on this as a future work? How do you partition the graphs or how do you choose theoreshold? Do you consider the variance of the results?  Please consider some statements to make this section more exciting (at least be very serious and formal), although it is not an important part...  }
In what follows, we use the results of our theoretical results to propose new GPR methods for hypergraph clustering. Following~\cite{kloumann2017block}, the geometric discriminant of interest equals $w^Tx_v,$ where $x_v$ is the landing probability vector of the vertex $v$. If only the first moments of the LPs are available, the optimal choice of $w$ corresponding to the maximal marginal separator of the centroids is given in Theorems~\ref{MAINTHM:CE} and~\ref{thm:abapprox} for clique-expansion RWoHs and tensor RWoHs, respectively.

The geometric discriminant only takes the first-order moments of LPs into account. As pointed out in~\cite{kloumann2017block}, the Fisher discriminant is expected to have better classification performance since it also make use of the covariances of LPs (see Figure~\ref{fig:explain}). More precisely, the Fisher discriminant takes the form $\left ( \Sigma^{-1}w\right)^Tx_v,$ where $x_v$ is the landing probability vector of the vertex $v$ and $\Sigma$ is the covariance matrix of the landing probability vector. The authors of~\cite{kloumann2017block} \emph{empirically} leveraged the information about the second-order moments of LPs. They showed that the Fisher discriminant has a performance that nearly matches that of belief propagation, the statistically optimal method for community detection on SBM~\cite{abbe2015detection,zhang2014scalable,mossel2014belief}. We therefore turn our attention to Fisher discriminant corresponding to clique-expansion and tensor RWoHs.

\begin{figure}[!htb]
\centering
  \subfigure[\scriptsize{Geometric discriminant.}\label{fig:geodis_plot}]{\includegraphics[width=0.48\linewidth]{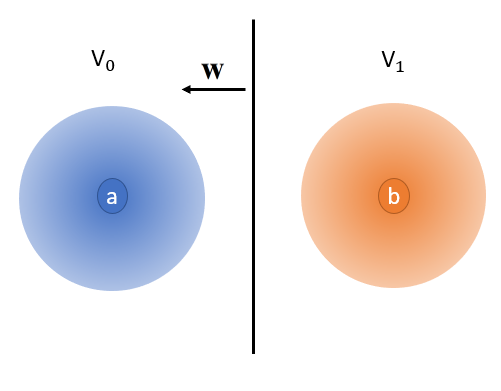}}
  \subfigure[\scriptsize{Fisher discriminant.}\label{fig:Fishdis_plot}]{\includegraphics[width=0.48\linewidth]{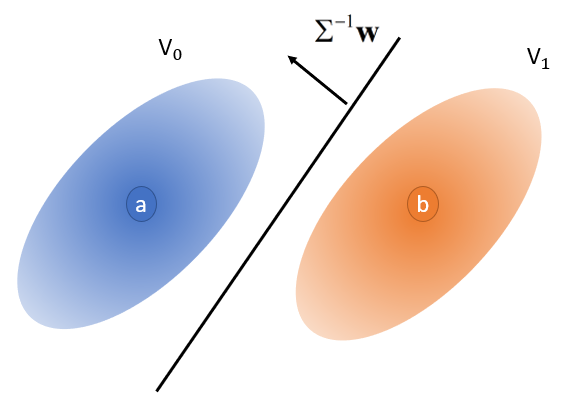}}
  \vspace{-0.5cm}
  \caption{Illustration of geometric and Fisher discriminants. Consecutive step LPs are correlated as the random walks have memory and one needs to take the covariance into account. The gradations in the colors reflect the density of the LPs in the ambient space. }\label{fig:explain}
  \vspace{-0.4cm}
\end{figure}
%However, the consecutive steps of LP are correlated due to the nature of random walk which is also pointed out in~\cite{kloumann2017block} (e.g. two-hop LP from a given seed correlate with the three-hop LP). Hence we need to take the second order moment (covariance) into account in order to achieve a good clustering performance in general cases.
%Moreover, the performance of clustering based on LPs depends on both centroid distance and the covariance which can be easily understood from Figure~\ref{fig:explain}.

Recall that our theoretical results shows that tensor RWoHs lead to larger centroid distances compared to those of clique-expansion RWoHs. Most importantly, the difference between the centroid distances of the two methods increase with the hyperedge size $d$. Hence, for large hyperedges sizes the theoretical results suggest that one should not directly use clique-expansion combined with PR methods. On the other hand, clique-expansion leads to reductions in the variance of random walks. This is intuitively clear since entries of the clique-expanded adjacency matrix contain sums of entries of the original adjacency tensor; hence the adjacency matrix obtained through clique-expansion will be ``closer'' to its expectation, implying a smaller variance. This also follows from Lemma~\ref{lma:concentrate} and ~\ref{lma:Hconcentrate} by observing that the empirical centroids of clique-expansion RWoHs converge faster as $n$ grows. This points to an important bias-variance trade-off between clique-expansion and tensor RWoHs. We therefore propose the following hybrid random walk scheme combining clique-expansion and tensor methods, referred to ``CET RWoHs''. The gist of the CET approach is not to replace a hyperedge by a complete graph as is done in clique-expansion, but replace it by a complete lower order hypergraph instead. On the reduced order hypergraph one can then apply the tensor RWoHs both to increase the centroid distance and to ensure smaller computational and space complexity.
\vspace{-0.1cm}
\section{Simulations}
\vspace{-0.1cm}
In the examples that follow, all results are obtained by averaging over $20$ independent trials.

The first test illustrates the clustering performance of geometric and Fisher discriminant corresponding to clique-expansion and tensor RWoHs for $3$-hSBM$(100,p,q)$ with a uniform seed initialization. More precisely, we start with $\bar{y}_{ce}^{(0)}$ and $\bar{y}_{t}^{(0)}$, respectively. Subsequently, we use
$k=6$ steps of the random walk for both the clique-expansion RWoHs and tensor RWoHs; our choice is governed by the fact that the centroid distance of the clique-expansion LPs with $k=6$ is close to $0$. Figure~\ref{fig:expResult_old} shows that for both geometric and Fisher discriminant, using the LPs of tensor RWoHs results in better clustering performance compared to that of clique-expansion RWoHs. This supports our theoretical results.% presented in the previous sections.
% that tensor based method has a larger centroid distance which leads to a better clustering performance with uniform initialization.

\begin{figure}[!htb]
\centering
%   \subfigure[$4$-hyperedge]{\includegraphics[width=0.32\linewidth]{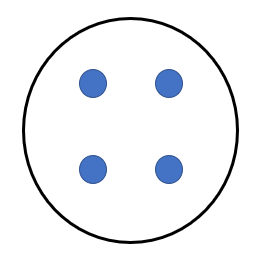}}
  \subfigure[\scriptsize{Geometric discriminant.}\label{fig:geoperfomance_plot}]{\includegraphics[width=0.48\linewidth]{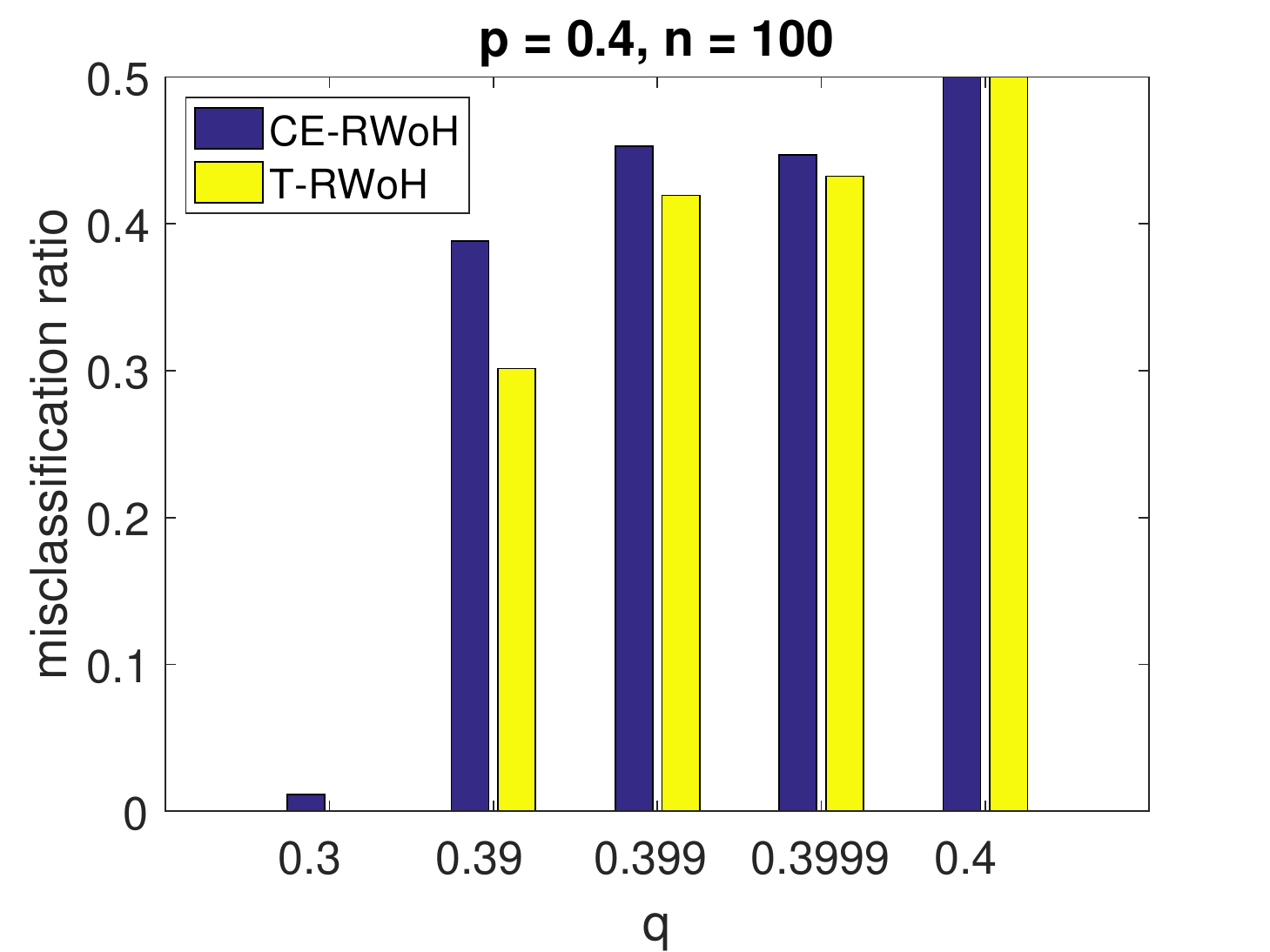}}
  \subfigure[\scriptsize{Fisher discriminant.}\label{fig:fisherperfomance_plot}]{\includegraphics[width=0.48\linewidth]{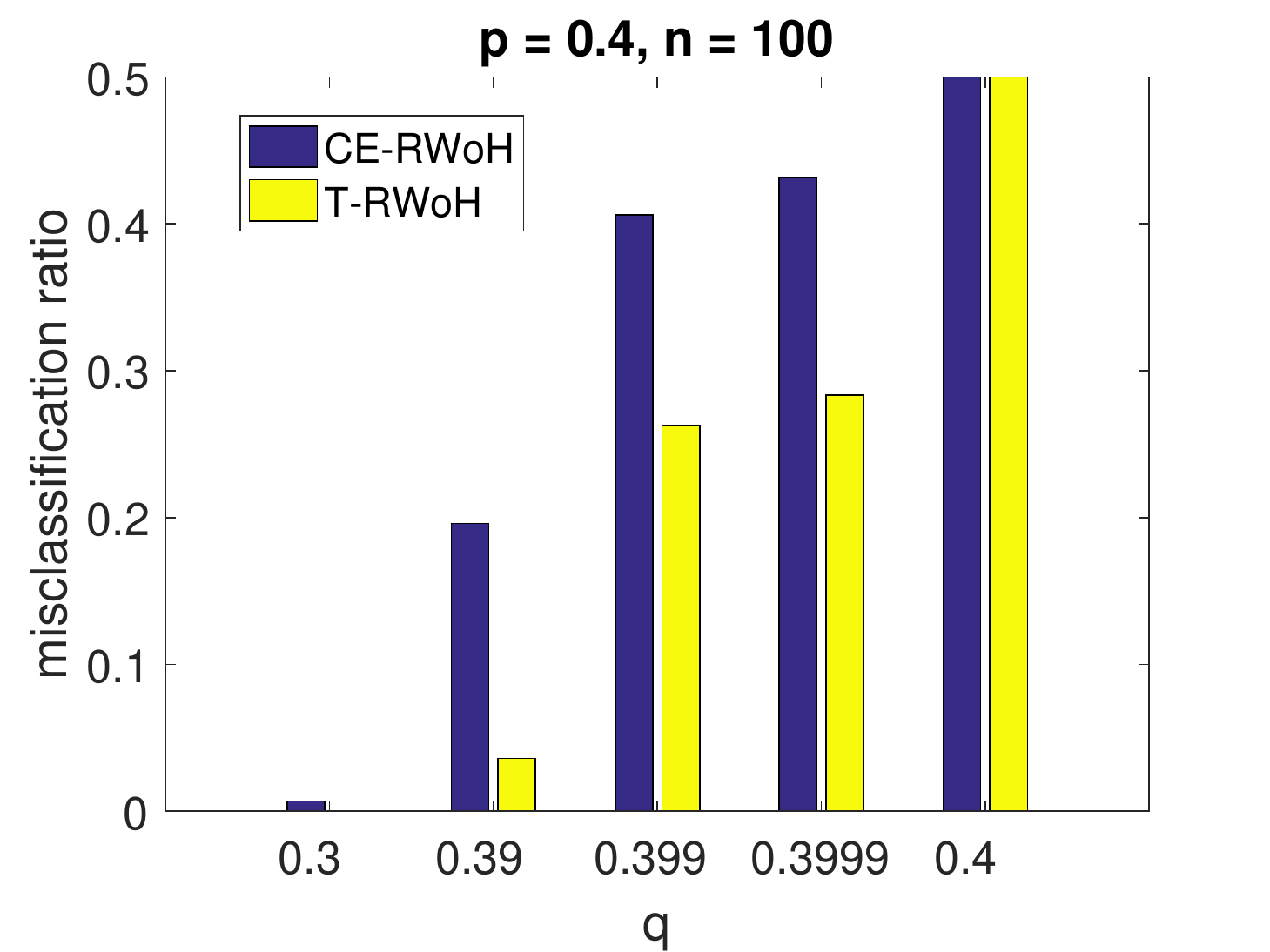}}
  \vspace{-0.5cm}
  \caption{Clustering performance of the CE (clique-expansion) and T (tensor) methods with uniform initialization on a $3$-hSBM.}\label{fig:expResult_old}
  \vspace{-0.2cm}
\end{figure}

However, in practice, one rarely uses a uniform initialization as it implies (partial) prior knowledge of the cluster structure: Seed-set expansion is usually of interest in applications where the seeds are user-defined. To illustrate the performance of the clique-expansion, tensor and CET methods in this setting, we also used single-vertex-seed initializations $y_{ce}^{(0)}$ and $y_{t}^{(0)}$, respectively. Figure~\ref{fig:expResult} provides simulations for a $4$-hSBM$(100,p,q)$, demonstrating that when only the first moment is used (i.e., when the discriminant is geometric), clique-expansion RWoHs offer the best performance.
This may be explained by observing that the LPs are correlated and clique-expansion RWoHs has a smaller variance than the other methods. On the other hand, if we additionally use the second moment (i.e., when the discriminant is Fisher), then the CET RWoHs has the best performance in almost all parameter regimes while the tensor RWoHs has the best performance only when $p-q$ is close to $0$. This finding matches our results and their interpretation in Section~\ref{sec:CoGPR}, indicating that CET RWoHs offers good bias-variance trade-offs. As the difference of the centroid distances of clique-expansion and tensor RWoHs grows as the hyperedge size $d$ increases, the performance gain of CET RWoHs is expected to be even larger for higher order hypergraphs. It remains an open question how to choose the best combination of hypergraph projections and tensor RWoHs with respect to both performance and computational complexity.

\begin{figure}[!htb]
\centering
%   \subfigure[$4$-hyperedge]{\includegraphics[width=0.32\linewidth]{4-hyperedge.PNG}}
  \subfigure[]{\includegraphics[width=0.49\linewidth]{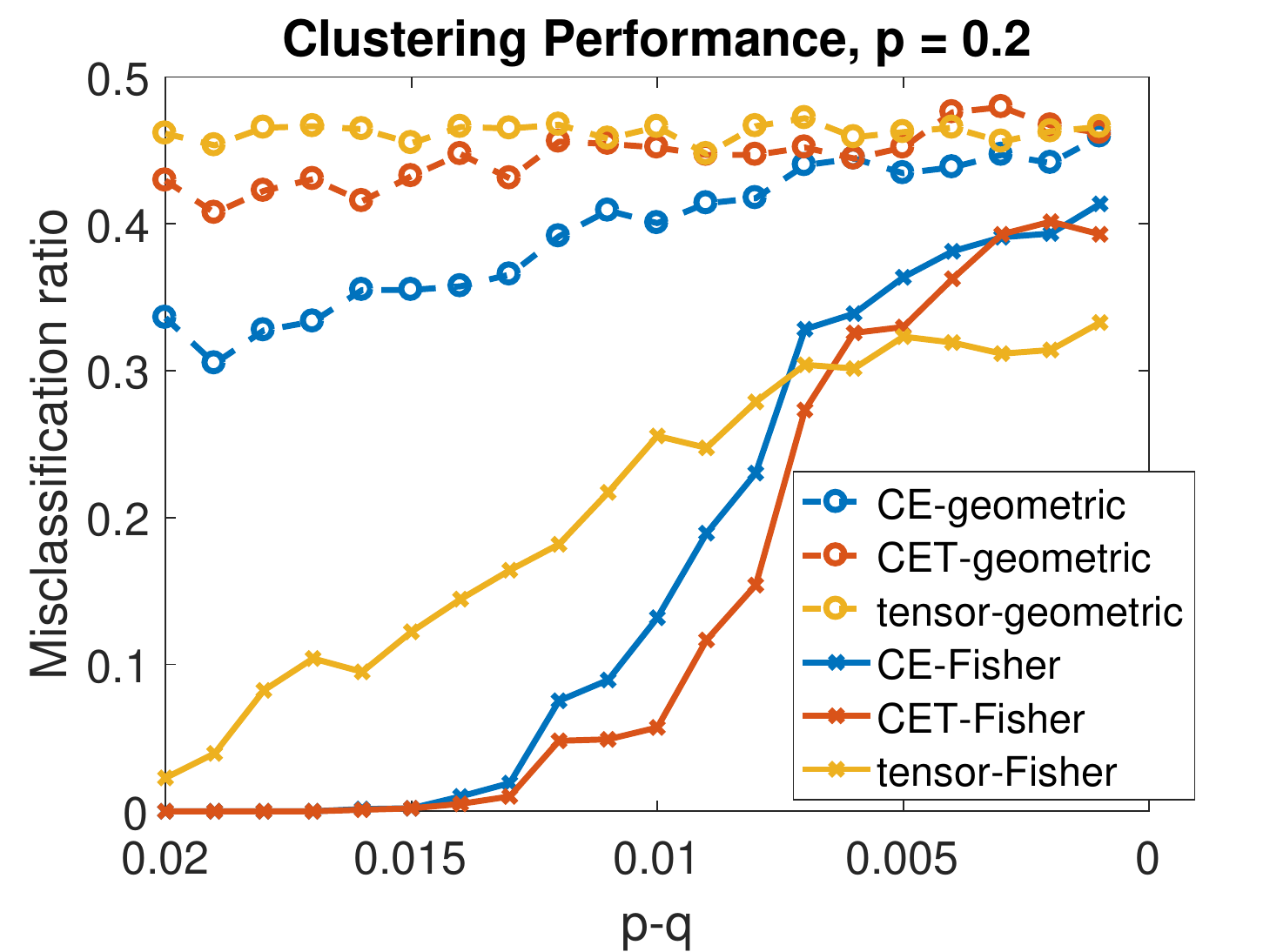}}
%   \subfigure[]{\includegraphics[width=0.33\linewidth]{p4_NEW.pdf}}
  \subfigure[]{\includegraphics[width=0.49\linewidth]{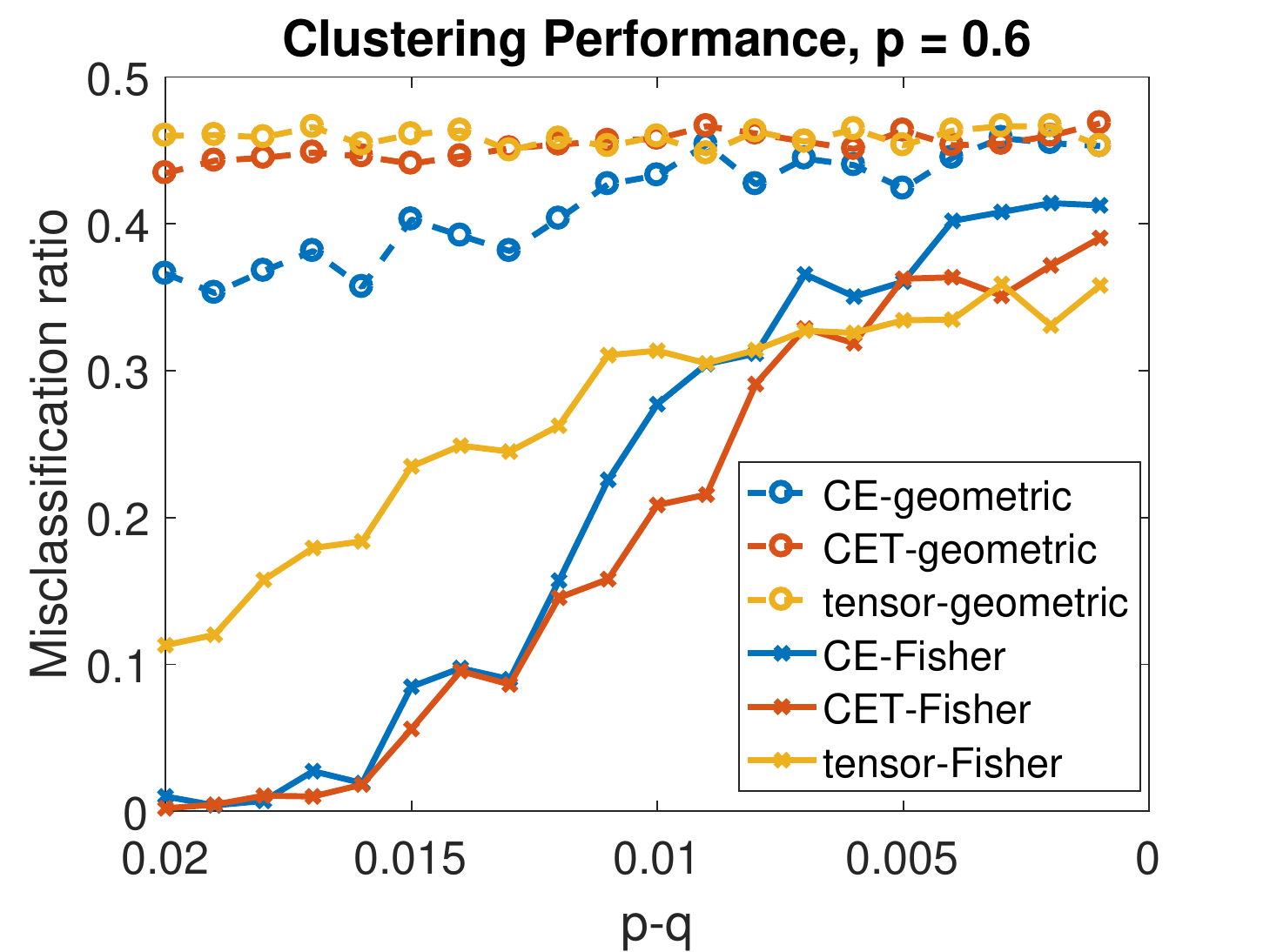}}
  \vspace{-0.5cm}
  \caption{Clustering performance of the CE (clique-expansion), tensor and CET methods with single-seed-vertex initialization on $4$-hSBM.}\label{fig:expResult}
\end{figure}
\textbf{Acknowledgment:} The work was supported by the NSF grant 1956384 and the NSF Center for Science of Information (CSoI) housed at Purdue University.

%\subsubsection*{Acknowledgements}

%Use unnumbered third level headings for the acknowledgements title.
%All acknowledgements go at the end of the paper.

%\subsubsection*{References}
%\nocite{langley00}
\bibliographystyle{IEEEtran}
\bibliography{example_paper}

\newpage
\appendix

\section{Clique-expansion RWoHs: Proofs}\label{sec:CEhRW} %\textcolor{blue}{I think most part of this section can be removed. Merge some of the results into Section 3, as I suggested in the following blue text. This section contains only the way to derive the recursion.}
In what follows, we provide a proof of Theorem~\ref{MAINTHM:CE}.% and first turn our attention to $\bar{x}_{ce}^{(1)}$.

Recall that $y_{i;ce}^{(k)}$ counts the number of paths of length $k$ starting from some seed vertex $s$ in $G^{(ce)}$ and that $\bar{y}_{i;ce}^{(k)}$ denotes its mean-field complement. Then
\begin{align*}
     \forall i\in V_0,\;\bar{y}^{(1)}_{i;ce} &=  \frac{2}{n}\sum_{s\in V_0}\sum_{v_1,...,v_{d-2}}\E A_{s,i,v_1,...,v_{d-2}} \\
    & = \left(\frac{n}{2}\right)^{d-2}(p+(2^{d-2}-1)q); \\
     \forall i\in V_1,\;\bar{y}^{(1)}_{i;ce} &=  \frac{2}{n}\sum_{s\in V_0}\sum_{v_1,...,v_{d-2}}\E A_{s,i,v_1,...,v_{d-2}} = n^{d-2}q.
    %& \bar{x}^{(1)}_s = e_1^T\bar{\mathbf{W}}_{ce}x^{(0)} \cong \frac{1}{\bar{F}}\sum_{l}\E A_{1sl} = 0
\end{align*}
Let  $\bar{F} = \sum_{i\in V}\bar{y}^{(1)}_{i;ce} = \frac{n}{2}((\frac{n}{2})^{d-2}(p+(2^{d-2}-1)q)+n^{d-2}q) = (\frac{n}{2})^{d-1}(p+(2^{d-1}-1)q)$. By definition
\begin{align*}
   \forall i\in V_0,\;\bar{x}^{(1)}_{i;ce} &= \frac{1}{\bar{F}}(\frac{n}{2})^{d-2}(p+(2^{d-2}-1)q) \\
   &= \frac{2}{n}\frac{p+(2^{d-2}-1)q}{p+(2^{d-1}-1)q},\\
   \forall i\in V_1,\;\bar{x}^{(1)}_{i;ce} &= \frac{n^{d-2}q}{\bar{F}} = \frac{2}{n}\frac{2^{d-2}q}{p+(2^{d-1}-1)q}.
\end{align*}
Based on this one-step analysis and the low-rank structure of the model one may conjecture that for general $k$, $\bar{x}^{(k)}_{v;ce}$ only takes two distinct values depending on $v$. %\textcolor{olive}{This is simply due to the low-rank structure of SBM.}
We prove this intuitive observation by induction.
%\begin{theorem}\label{thm:MFA_CE}
%Let $G$ be generated from $3$-hSBM$(n,p,q)$ and $G^{(ce)}$ be the graph obtained from $G$ by clique-expansion (See Section ~\ref{sec:RWoH}). The CE-RWoH are associated with a initial vector $y^{(0)}_{s; ce} = 1$ and $y^{(0)}_{v;ce} = 0,\;\forall v\neq s$, where $s$ is uniformly at random chosen from $V_0$. Let $\bar{y}^{(0)}_{ce} = \E y^{(0)}_{ce}$.  Then for all $k\geq 0$ we have
%    \begin{align}
%    \bar{x}^{(k)}_{i;ce} =
%    \begin{cases}
%       \bar{b}^{(k)}  & \text{if }i\in V_1
%     \end{cases}
% \end{align}
% where \textcolor{blue}{I think we can also put the following recursion into Thm 1, as that in Thm 3. Then the proof of this section only focuses on how to derive this recursion. Actually, as only as we have this recursion, we even do not need to show something like the eigenvalue decomposation, which is too normal to be shown in a research paper. Check our KDD paper, what we have there are only the recursion and the final results.}
% \begin{align}
%     &\begin{bmatrix}
%     \bar{a}^{(k)}\\
%     \bar{b}^{(k)}
%     \end{bmatrix}
%     = \frac{1}{\bar{F}}
%     \begin{bmatrix}
%         \frac{n^2}{4}(p+q)  &   \frac{n^2}{2}q\\
%         \frac{n^2}{2}q  &   \frac{n^2}{4}(p+q)
%     \end{bmatrix}
%     \begin{bmatrix}
%     \bar{a}^{(k-1)}\\
%     \bar{b}^{(k-1)}
%     \end{bmatrix}\\
%     &\begin{bmatrix}
%     \bar{a}^{(0)}\\
%     \bar{b}^{(0)}
%     \end{bmatrix}
%     = \frac{2}{n}
%     \begin{bmatrix}
%     1\\
%     0
%     \end{bmatrix}\label{eq:CE1}
% \end{align}
% \end{theorem}
Clearly, $\frac{n}{2}[\bar{a}^{(k)}\;\; \bar{b}^{(k)}]^T$ is a probability vector and the base case $k=0$ follows directly. For the induction step, assume that the induction hypothesis is true for $1,...,k$. Let $l = (v_1,...,v_{d-2})$ and write $l\in V$ to indicate that $v_1,...,v_{d-2}\in V$. Then
    \begin{align*}
        & \forall i\in V_0,\; \bar{x}^{(k+1)}_{i;ce} = \frac{\bar{y}_{i;ce}^{(k+1)}}{\sum_{j\in V}\bar{y}_{j;ce}^{(k+1)}} =  \frac{\sum_{j,l\in V}\E A_{jil}\bar{y}_{j;ce}^{(k)}}{\sum_{j,l,v\in V}\E A_{jvl}\bar{y}_{j;ce}^{(k)}} \\
        & = \frac{\sum_{j,l\in V}\E A_{jil}\bar{x}_{j;ce}^{(k)}}{\sum_{j,l,v\in V}\E A_{jvl}\bar{x}_{j;ce}^{(k)}} \\
        &= \frac{\sum_{j\in V_0}\sum_{l}\E A_{jil}\bar{a}^{(k)} + \sum_{j\in V_1}\sum_{l}\E A_{jil}\bar{b}^{(k)} }{\sum_{j\in V_0}\sum_{l,v}\E A_{jvl}\bar{a}^{(k)} + \sum_{j\in V_1}\sum_{l,v}\E A_{jvl}\bar{b}^{(k)} }\\
        & =\frac{(\frac{n}{2})^{d-1}(p+(2^{d-2}-1)q)\bar{a}^{(k)} + \frac{n^{d-1}q}{2}\bar{b}^{(k)}}{\bar{F}\frac{n}{2}(\bar{a}^{(k)}+\bar{b}^{(k)})} = \bar{a}^{(k+1)}.
    \end{align*}
A similar argument may be used to characterize $\bar{b}$.

\section{Theorem~\ref{thm:abapprox}: Large $d$ regime}\label{app:thm2larged}

We start by computing the eigenvalues of the update matrix for $\beta$. For this purpose, rewrite~\eqref{margsimp7} as follows:
\begin{equation}
    \frac{nq}{2}\times
    \begin{bmatrix}
     	0 & \cdots & 0 & 0 & R\\
        1 & 0 & \cdots & 0 & R\\
        0 & \ddots & \vdots & 0 & R\\
        0 & \cdots & 1 & 0 & R\\
        0 & \cdots & 0 & 1 & R\\
     \end{bmatrix} \triangleq \frac{nq}{2} \mathbf{Q_1}, R \triangleq \frac{p-q}{q}.
\end{equation}
Note that $\mathbf{Q_1}$ takes the form of a companion matrix of dimension $d-1$. It is well known that if $\mathbf{Q_1}$ has $d-1$ distinct eigenvalue, then it can be diagonalized as follows
\begin{equation*}
    UQ_1U^{-1} = Diag(\lambda_1,...,\lambda_{d-1}),
\end{equation*}
where $U$ is the Vandermonde matrix associate with $\lambda_i$, and $U_{ij} = \lambda_{i}^{j-i}$. Note that $\mathbf{Q_1}$ has full rank when $p>q$.

Next we characterize the eigenvalues $\lambda_i$ by writing down the characteristic polynomial of $\mathbf{Q_1}$:
\begin{equation}\label{eq:eigenbeta1}
    t^{d-1} = R(t^{d-2} + t^{d-3} + ... + 1).
\end{equation}
%\textcolor{blue}{The above equation can be further transformed into a recurrence sequence that only depends on $\beta_1$ and this is the key point that I can leverage to prove the general insight for arbitrary $d$. }
In general, there exists no closed form in terms of radicals when $d-1>5$. However, we can find the roots approximately by assuming that $\frac{p-q}{q}=R\neq \frac{1}{d-1}$ and using the following argument. Clearly, $t=1$ is not a root of the polynomial unless $R \triangleq \frac{p-q}{q} = \frac{1}{d-1}$, which we ruled out for the sake of simplifying the analysis. Since we allow $d$ to be large, $R = \frac{1}{d-1}$ implies $p$ will be close to $q$. Also, it is clear that $t = 0$ is not a root of the polynomial unless $R = 0$. Hence we will also assume that $t \neq 0$. Multiplying both sides of the polynomial expression~\eqref{eq:eigenbeta1} by $(t-1)$
we obtain
\begin{align}
    &t^{d}-(R+1)t^{d-1}+R = 0 \nonumber\\
    &\Leftrightarrow t + \frac{R}{t^{d-1}} = R+1. \label{eq:eigenbeta2}
\end{align}

The eigenvalues of the matrix under consideration satisfy either $|t|>1$ ,$|t|<1$ or $|t| = 1$. For $|t| > 1$, as $d$ is large, the LHS of \eqref{eq:eigenbeta2} will be close to $t$ which implies that there is a root close to $t = R+1>1$. For the case $|t| = 1$, we may write $t = e^{i\theta}$. Then $e^{i\theta}+ R e^{-i(d-1)\theta} = R+1$. Since $R\in \mathbb{R}$ we require $e^{i\theta} = e^{-i(d-1)\theta} = 1$ which violates our assumption that $t = 1$ is not a root.

On the other hand, when $|t|<1$, the LHS of~\eqref{eq:eigenbeta2} has a value close to $\frac{R}{t^{d-1}}$. Thus, the remaining $d-2$ eigenvalues will be close to the $d-1$ complex roots of $t^{d-1} = \frac{R}{R+1}$. However, note that the real root $t = (\frac{R}{R+1})^{1/d} \rightarrow 1$ as $d\rightarrow \infty$ and has to be ruled out. Consequently, we have $\max_{i\in[d-1]} |\lambda_i|$ close to $R+1 = \frac{p}{q}$ and $|\lambda_i|$ close to $(\frac{R}{R+1})^{1/d} = (\frac{p-q}{p})^{1/d}$ for the remaining cases. The numerical results in Figure~\ref{fig:EIGbeta} support our above presented argument.
\begin{figure}[!htb]
\centering
  \includegraphics[width=0.485\linewidth]{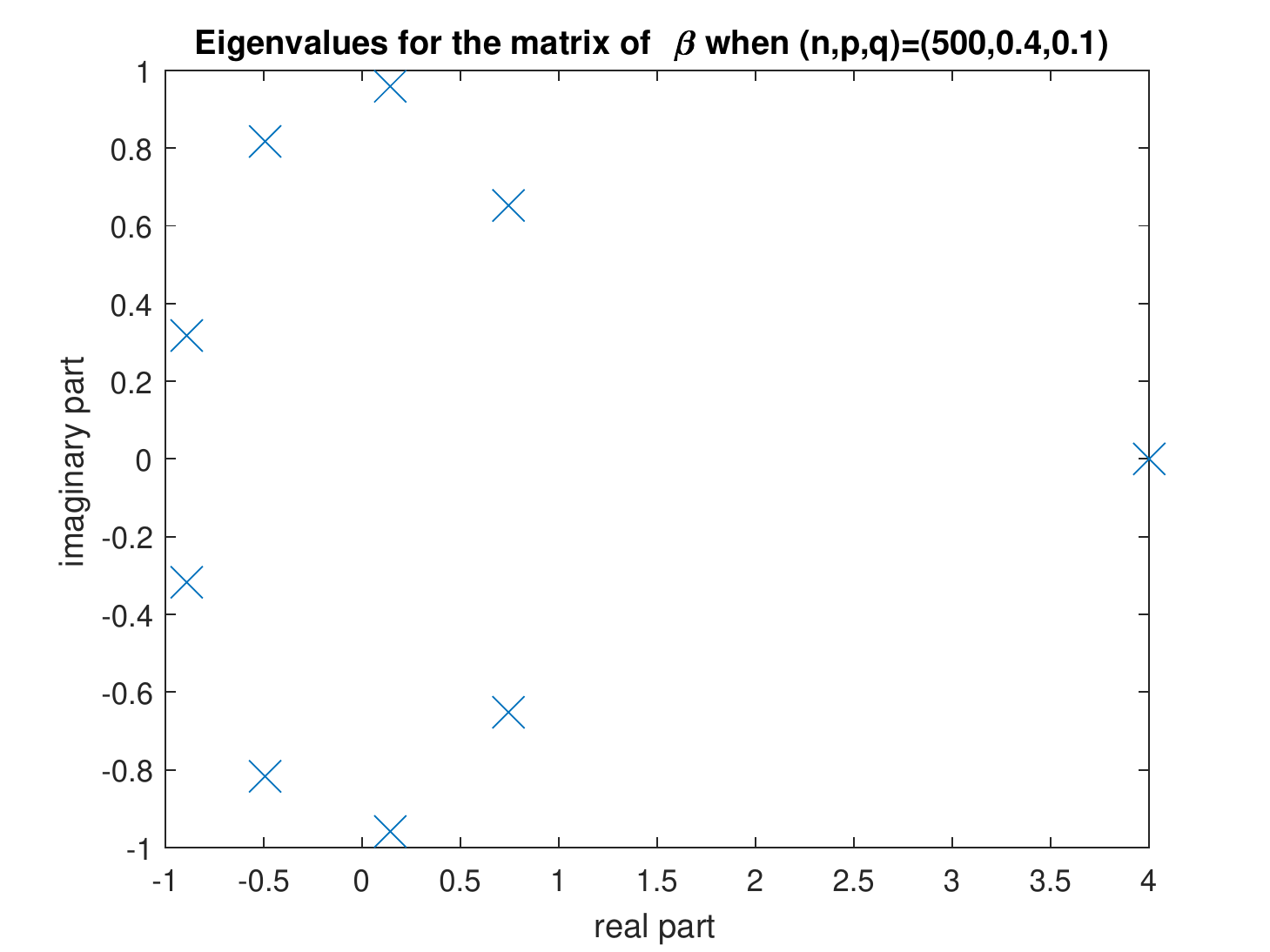}
  \includegraphics[width=0.485\linewidth]{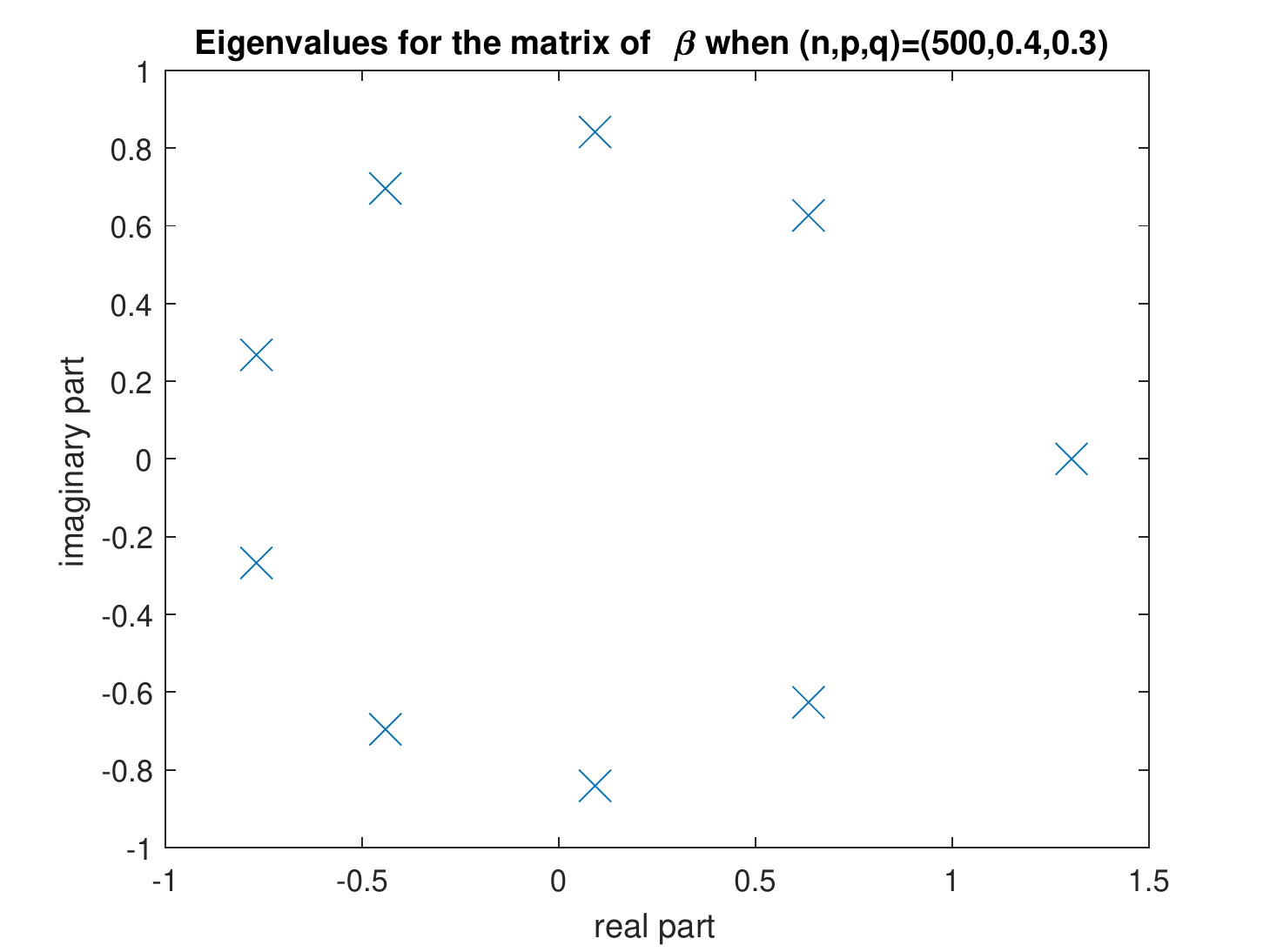}
  \caption{The roots of~\eqref{eq:eigenbeta1}.}\label{fig:EIGbeta}
\end{figure}

Next we evaluate the eigenvalues of the update matrix for $\zeta$, which is a significantly more complicated task. Again we first rewrite the update matrix in~\eqref{margsimp8} as:
\begin{equation}
    \frac{nq}{2}\times
    \begin{bmatrix}
     	2 & \cdots & 0 & 0 & R\\
        1 & 0 & \cdots & 0 & R\\
        0 & \ddots & \vdots & 0 & R\\
        0 & \cdots & 1 & 0 & R\\
        0 & \cdots & 0 & 1 & R\\
     \end{bmatrix} \triangleq \frac{nq}{2} \mathbf{Q_2}, R \triangleq \frac{p-q}{q}.
\end{equation}
Note that we cannot write down a simple explicit formula for of the eigenvalues of $\mathbf{Q_2}$ now, as the matrix is not a companion matrix. Still we can show that the characteristic polynomial of $\mathbf{Q_2}$ reads as follows
\begin{equation}\label{eq:eigenzeta1}
    t^{d-1} - (2+R)t^{d-2} + R(t^{d-3}+t^{d-4}+...+1) = 0.
\end{equation}
Similarly, $t = 0$ is not a root of the equation since $p>q \Rightarrow R>0$. Moreover, $t = 1$ is also not a root unless $R = \frac{1}{d-3}$, which we rule out for simplicity of analysis. Once again multiplying both sides of the equations by $(t-1)$ we have
\begin{align}
    & t^{d} - (3+R)t^{d-1} + 2(1+R)t^{d-2} = R \nonumber\\
    & \Leftrightarrow (t-2)(t-(1+R)) = \frac{R}{t^{d-2}}.
\end{align}
Obviously, when $|t| > 1$ there are two real eigenvalues close to $2$ and $1+R$ whenever $d$ is sufficiently large. On the other hand, the remaining eigenvalues are complex and contained within the ring $\{z\in \mathbb{C}|\;\; |(\frac{R}{2(1+R)})^{\frac{1}{d-1}}|\leq |z| \leq 1\}$. Numerical results also confirm this finding as illustrated in Figure~\ref{fig:EIGzeta}.
\begin{figure}[!htb]
\centering
  \includegraphics[width=0.485\linewidth]{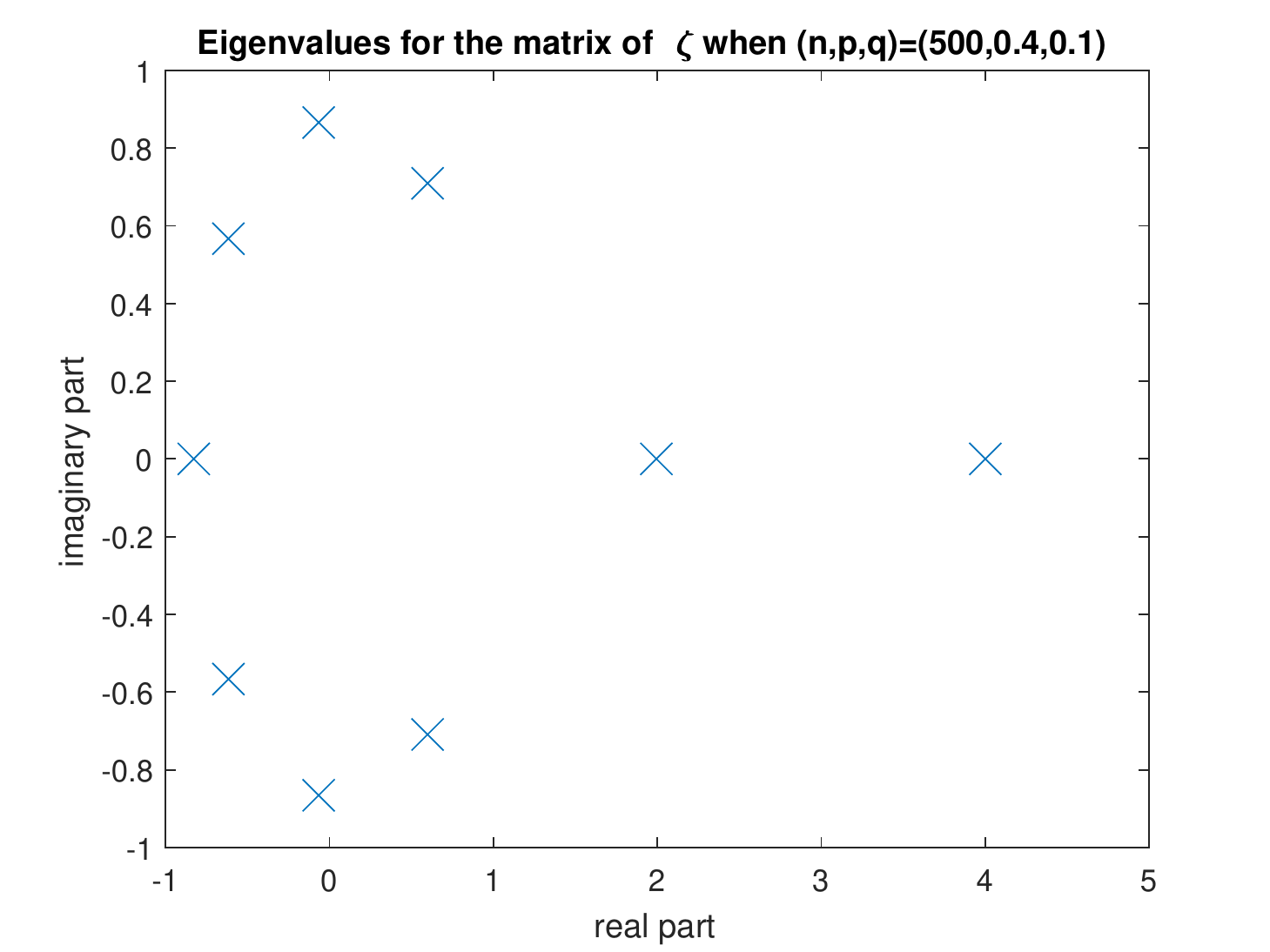}
  \includegraphics[width=0.485\linewidth]{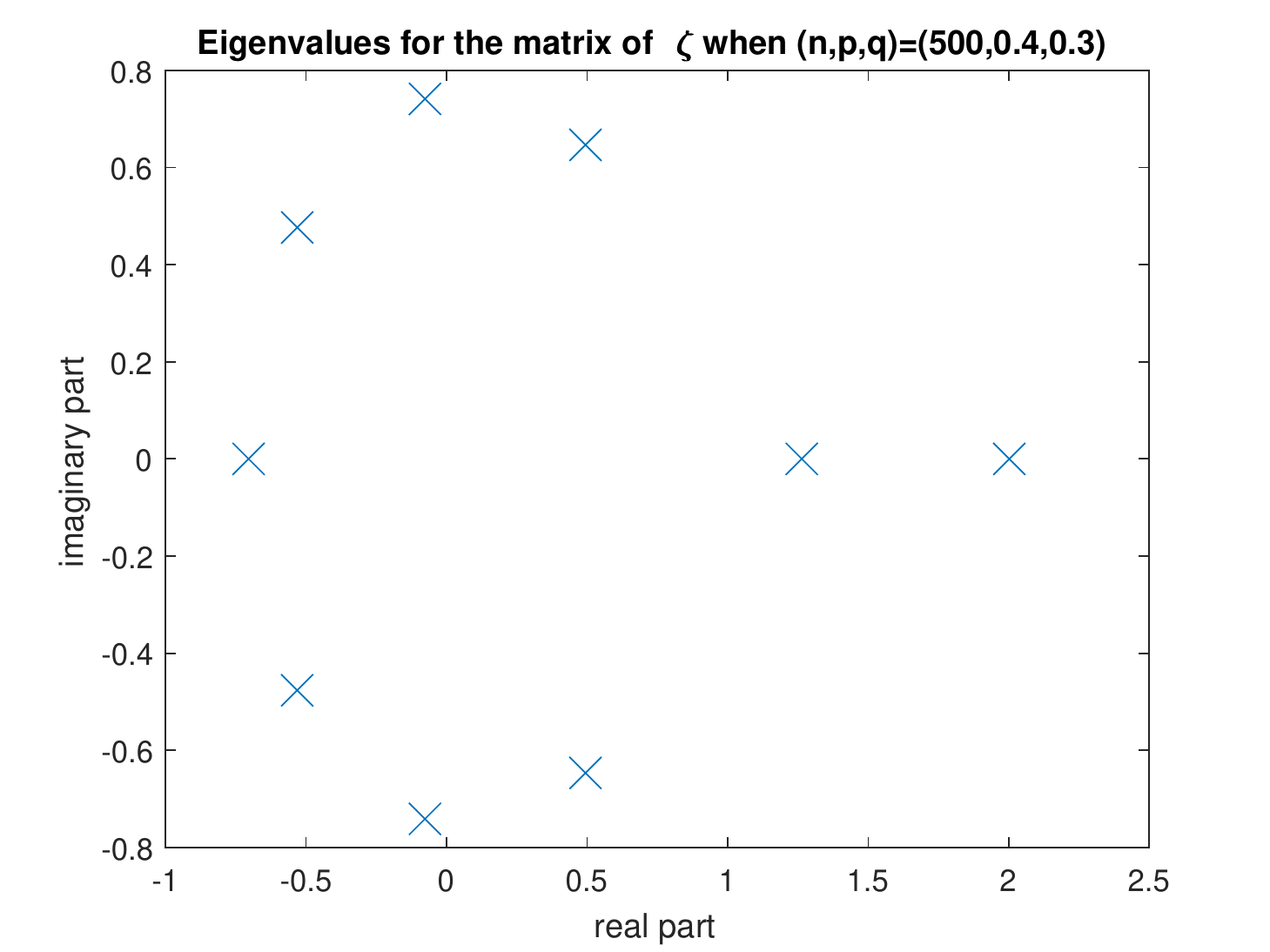}
  \caption{The roots of~\eqref{eq:eigenzeta1}.}\label{fig:EIGzeta}
\end{figure}

Thus by considering only the leading terms in $\beta,\zeta$, $\bar{w}_k$ we may write
$$\frac{2}{n}\frac{C_3(1+R)^k}{C_1(2)^k + C_2(1+R)^k} = \frac{2}{n}\frac{C_3(p)^k}{C_1(2q)^k + C_2(p)^k}.$$

\section{Closed form results for Theorem~\ref{thm:abapprox} and $d=3$}\label{app:d3case}
When $d=3$, we can have the following closed-form characterization of the centroid distance.
% We start from the case when $G$ is generated from $3$-hSBM$(n,p,q)$ as a warming-up. In contrast to CE-RWoH, generalization the results for $d=3$ to arbitrary $d$ is not trivial while we will provide detailed discussion for the large $d$ case later.  %The Later, we will generalize the results to arbitrary $d$.
%We show that when $G$ is generated from $3$-hSBM$(n,p,q)$, then we have the following result
\begin{corollary}\label{thm:d3case}
    Let $G$ be sampled from a $3$-hSBM$(n,p,q)$ and let the tensor RWoHs be associated with a initial vector $y^{(0)}_{s_1,s_2; h} = 1$ and $y^{(0)}_{v_1,v_2;h} = 0,\;\forall (v_1,v_2)\neq (s_1,s_2)$, where $s_1$ and $s_2$ are chosen independently and uniformly at random from $V_0$. Let $\bar{y}^{(0)}_{h} = \E y^{(0)}_{h}$. Then, the mean-field LPs $\bar{x}_{h}^{k}$ of the tensor RWoHs will have exactly two geometric centroids $\bar{a}^{k},\bar{b}^{k}$ that satisfy
    \begin{align*}
  \bar{a}^{(k)} - \bar{b}^{(k)} = \frac{2}{n}\frac{\beta_1(k)}{\zeta_1(k)}, %\textcolor{blue}{2/n?}
    \end{align*}
    where
    \begin{align*}
        &\zeta_1(k) = \frac{\sqrt{R^2+4}-R-2}{2\sqrt{R^2+4}}\times (\frac{nq}{4}(R-\sqrt{R^2+4}+2))^k \\
        &+ \frac{\sqrt{R^2+4}+R+2}{2\sqrt{R^2+4}}\times (\frac{nq}{4}(R+\sqrt{R^2+4}+2))^k,\\
        &\beta_1(k) = \frac{1}{2}\left(1-\sqrt{\frac{R}{R+4}}\right)\times (\frac{nq}{4}(R-\sqrt{R}\sqrt{R+4}))^k \\
        &+ \frac{1}{2}(1+\sqrt{\frac{R}{R+4}})\times(\frac{nq}{4}(R+\sqrt{R}\sqrt{R+4}))^k,
    \end{align*}
     and $R = \frac{p-q}{p}$.
\end{corollary}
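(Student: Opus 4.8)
The plan is to specialize Theorem~\ref{thm:abapprox} to $d=3$, where the state-reduced recurrences collapse into two independent $2\times 2$ linear recursions---one for $(\beta_1,\beta_2)$ and one for $(\zeta_1,\zeta_2)$---and then solve each in closed form by diagonalizing its $2\times 2$ update matrix. Since the target quantity is $\bar{w}_k = \frac{2}{n}\,\beta_1(k)/\zeta_1(k)$, the common scalar factor $\frac{4}{n^2}$ carried by both initial conditions cancels in the ratio; I may therefore run the computation with the normalized initial vector $(1,1)^T$, which is precisely why the stated closed forms satisfy $\beta_1(0)=\zeta_1(0)=1$ rather than $\tfrac{4}{n^2}$.

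First I would factor $\frac{nq}{2}$ out of each update matrix and set $R = \frac{p-q}{q}$, so that the $\beta$-recursion is driven by $\begin{bmatrix}0 & R\\ 1 & R\end{bmatrix}$ and the $\zeta$-recursion by $\begin{bmatrix}2 & R\\ 1 & R\end{bmatrix}$. (This normalization $R=(p-q)/q$ is the one that makes the eigenvalues agree with the statement; the $R=(p-q)/p$ printed in the corollary appears to be a typo.) The characteristic polynomials are $\lambda^2 - R\lambda - R = 0$ for $\beta$, with roots $\lambda_\pm = \tfrac{1}{2}(R\pm\sqrt{R}\sqrt{R+4})$, and $\nu^2 - (R+2)\nu + R = 0$ for $\zeta$, with roots $\nu_\pm = \tfrac{1}{2}((R+2)\pm\sqrt{R^2+4})$. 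Reinstating the factor $\frac{nq}{2}$ reproduces exactly the four growth rates $\frac{nq}{4}(R\pm\sqrt{R}\sqrt{R+4})$ and $\frac{nq}{4}(R+2\pm\sqrt{R^2+4})$ that sit inside the $k$-th powers in the corollary. Because $p>q$ forces $R>0$, both discriminants are strictly positive, each matrix has two distinct real eigenvalues, and hence both are diagonalizable.

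Next I would take an eigenvector for each eigenvalue---$(R,\lambda_\pm)^T$ for $\beta$ and $(R,\nu_\pm-2)^T$ for $\zeta$---expand the initial vector as $(1,1)^T = c_+ v_+ + c_- v_-$ in each eigenbasis, and solve the resulting $2\times 2$ system for $c_\pm$ using $c_++c_-=1/R$. The first component then reads $\beta_1(k) = (\tfrac{nq}{2})^k\,(R c_+ \lambda_+^k + R c_- \lambda_-^k)$, and likewise for $\zeta_1$. The remaining work is to simplify $Rc_\pm$: for $\beta$ one checks $R c_\pm = \tfrac{1}{2}(1\pm\sqrt{R/(R+4)})$ after rationalizing, and for $\zeta$ one checks $R c_\pm = \frac{\sqrt{R^2+4}\pm(R+2)}{2\sqrt{R^2+4}}$ using $\nu_\pm - 2 = \tfrac{1}{2}(R-2\pm\sqrt{R^2+4})$. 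Matching these against the stated coefficients completes the derivation, and $\bar{w}_k = \frac{2}{n}\,\beta_1(k)/\zeta_1(k)$ is then immediate from Theorem~\ref{thm:abapprox}.

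I expect the only genuine effort to lie in this last step---normalizing the eigenvectors consistently and reducing the projection coefficients to the advertised forms (for instance, collapsing $\frac{\sqrt{R}}{2\sqrt{R+4}}\bigl(1+\sqrt{(R+4)/R}\bigr)$ to $\tfrac{1}{2}(1+\sqrt{R/(R+4)})$). There is no conceptual obstacle: for $d=3$ the state reduction of Theorem~\ref{thm:abapprox} already delivers honest $2\times 2$ systems with distinct real eigenvalues, so the solution is a textbook diagonalization and the corollary amounts to verifying that the proposed closed forms simultaneously satisfy the recurrences and the normalized initial condition $(1,1)^T$.
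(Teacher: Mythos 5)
Your proposal is correct and takes exactly the paper's route: the paper's entire proof is a one-line appeal to the eigenvalue decomposition of the two $2\times 2$ update matrices from Theorem~\ref{thm:abapprox} specialized to $d=3$, which is precisely the diagonalization you carry out in detail (your eigenvalues, eigenvectors, and projection coefficients all check out and reproduce the stated closed forms). Your two reconciliations are also accurate: $R$ must be $\frac{p-q}{q}$, consistent with Supplements~\ref{app:thm2larged} and~\ref{sec:pfthm3}, rather than the printed $\frac{p-q}{p}$, and the $\frac{4}{n^2}$ initial condition of Theorem~\ref{thm:abapprox} may be normalized to $(1,1)^T$ because the common factor cancels in the ratio $\beta_1(k)/\zeta_1(k)$.
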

Corollary~\ref{thm:d3case} directly follows from the eigenvalue decomposition of the matrices in Theorem~\ref{thm:abapprox}.
% \section{Proof of Theorem~\ref{cor:HgeqCE}}
% From Theorem~\ref{thm:abapprox}, we know that the characteristic function of $\beta$ will be
% $$t^{d-1} = R(t^{d-2}+\cdots+1)$$
% and for $\zeta$
% $$t^{d-1} = (2+R)t^{d-2}-R(t^{d-3}+\cdots+1),$$
% where $R = \frac{p-q}{q}$. Hence we know that $\beta_1,\zeta_1$ should satisfy
% \begin{align*}
%     & \beta_1(k) = R(\beta_1(k-1)+\cdots+ \beta_1(k-(d-1)))\\
%     & \zeta_1(k) = (2+R)\zeta_1(k-1)-R(\beta_1(k-2)+\cdots+ \beta_1(k-(d-1))).
% \end{align*}
% Moreover, from the initial condition in Theorem~\ref{thm:abapprox} we can easily observe that $\forall k\geq 0$, $\beta_1(k)\geq 0$ and $\zeta_1(k)\geq 0$. Hence we have
% \begin{align*}
%     & \bar{w}_k = \frac{2}{n}\frac{\beta_1(k)}{\zeta_1(k)} = \frac{2}{n}\frac{R(\beta_1(k-1)+\cdots+ \beta_1(k-(d-1)))}{(2+R)\zeta_1(k-1)-R(\beta_1(k-2)+\cdots+ \beta_1(k-(d-1)))}\\
%     & \geq \frac{2}{n}\frac{\beta_1(k-1)}{\zeta_1(k-1)}\frac{R}{R+2} = \frac{R}{R+2}\bar{w}_{k-1} = \frac{p-q}{p+q}\bar{w}_{k-1}.
% \end{align*}
% Hence we complete the proof.

\section{Proof of Lemma~\ref{lma:concentrate}}\label{app:pflma1}
The proof follows along the same lines as the proof of the concentration result in~\cite{kloumann2017block}.
Given $\epsilon>0$ and the fact that $k<K$ for some constant $K$, we choose $\gamma\geq 0$ so that $\left (\frac{1-\gamma}{1+\gamma}\right )^K \geq 1-\epsilon$, $\left (\frac{1+\gamma}{1-\gamma}\right )^K \leq 1+\epsilon$.

First we set $d_{00} = \mathbb{E}\sum_{u\in V_0}A_{vu}^{(ce)},\;v\in V_0$; $d_{01}, d_{10}$ and $d_{11}$ can be defined similarly.
It is not hard to see that under the $d$-hSBM, we have $d_{00} = d_{11} = (\frac{n}{2})^{d-1}(p+(2^{d-2}-1)\times q)$ and $d_{01} = d_{10} = (\frac{n}{2})^{d-1}(2^{d-2}q)$. Next we denote the sum of random variables as following,
\begin{equation*}
    v\in V_i,\; S_{ij} \triangleq \sum_{u\in V_j}A_{vu}^{(ce)} = \sum_{v_1,...,v_{d-2}\in V}\sum_{u\in V_j}A_{vul}.
\end{equation*}
It is clear that $\mathbb{E}S_{ij} = d_{ij} \geq \frac{n^{d-1}q}{2}$. Then we need to count the number of independent random variables in the above expression for all $i,j\in\{0,1\}$ in order to apply Hoeffding's bound. Note that the number of independent random variables in summation above is at most $$\frac{\frac{n^{d-1}}{2}}{\lfloor\frac{d-1}{2}\rfloor ! \lceil \frac{d-1}{2}\rceil !},$$ where each of them appears at most $(d-1)!$ times. Hence by Hoeffding's bound
% In the first sum there are $\frac{n^2}{8}$ independent random variables and each of them appears once, since $A_{vul} = A_{vlu}$ by the symmetry property of $3$-hSBM. In the second sum we have $\frac{n^2}{4}$ independent random variables. For $v\in V_0,$ the Hoeffding's bound implies that
\begin{align*}
    & \mathbb{P}\left \{S_{ij} \notin [(1-\gamma)d_{ij},(1+\gamma)d_{ij}]\right \} \\
    &\leq 2\exp\left(-\frac{\lfloor\frac{d-1}{2}\rfloor ! \lceil \frac{d-1}{2}\rceil !}{[(d-1)!]^2} \gamma^2q^2n^{d-1}\right),\;\forall i,j\in\{0,1\}.
\end{align*}
% Similarly, for $v\in V_0$,
% \begin{align*}
%     & \mathbb{P}\left \{\sum_{u\in V_1}A_{vu}^{(ce)} \notin [(1-\gamma)d_{01},(1+\gamma)d_{01}]\right \} \\
%     & \leq 2\exp(-\frac{2n^2\gamma^2q^2}{3}).
% \end{align*}
By invoking the union bound over $S_{ij},\forall i,j\in\{0,1\}$ we have
\begin{align*}
    & \mathbb{P}\left \{S_{ij} \in [(1-\gamma)d_{ij},(1+\gamma)d_{ij}] \right \} \\
    &\geq 1-4n\exp\left(-\frac{\lfloor\frac{d-1}{2}\rfloor ! \lceil \frac{d-1}{2}\rceil !}{[(d-1)!]^2} \gamma^2q^2n^{d-1}\right).
\end{align*}
Hence if $\frac{q^2n^{d-1}}{\log(n)}\rightarrow \infty$ then for $d$ constant and $n$ large enough the following event hold
\begin{align}\label{eq:CEconcentrate1}
    S_{ij} \in [(1-\gamma)d_{ij},(1+\gamma)d_{ij}],\;\forall i,j\in\{0,1\}
\end{align}
with probability at least
$$1-4n\exp\left(-\frac{\lfloor\frac{d-1}{2}\rfloor ! \lceil \frac{d-1}{2}\rceil !}{[(d-1)!]^2} \gamma^2q^2n^{d-1}\right) = 1-o(1).$$
% \textcolor{red}{Can we be more precise about the "high probability" value?}
In the derivations that follow we condition our probability computation given this event. We prove by induction that the following two claims are true
\begin{align*}
    & M^{(k)} \triangleq \sum_{i\in V_0}y_{i;ce}^{(k)}\in [(1-\gamma)^k\bar{M}^{(k)},(1+\gamma)^k\bar{M}^{(k)}]\\
    & N^{(k)} \triangleq \sum_{i\in V_1}y_{i;ce}^{(k)}\in [(1-\gamma)^k\bar{N}^{(k)},(1+\gamma)^k\bar{N}^{(k)}],
\end{align*}
where $\bar{M},\bar{N}$ satisfy the following recurrence relations
\scriptsize
    \begin{align}\label{eq:recurMN}
    &\begin{bmatrix}
    \bar{M}^{(k)}\\
    \bar{N}^{(k)}
    \end{bmatrix}
    = (\frac{n}{2})^{d-1}
    \begin{bmatrix}
       p+(2^{d-2}-1)q  &   2^{d-2}q\\
       2^{d-2}q  &   p+(2^{d-2}-1)q
    \end{bmatrix}
    \begin{bmatrix}
    \bar{M}^{(k-1)}\\
    \bar{N}^{(k-1)}
    \end{bmatrix}\nonumber\\
    &\begin{bmatrix}
    \bar{M}^{(0)}\\
    \bar{N}^{(0)}
    \end{bmatrix}
    =
    \begin{bmatrix}
    1\\
    0
    \end{bmatrix}.
\end{align}
\normalsize
The base case holds due to the choice of the initial conditions. For the induction step, by definition we have that $\sum_{i\in V_0}y^{(k+1)}_{i;ce}$ equals
\scriptsize
\begin{align*}
        & \sum_{i\in V_0}\sum_{j\in V}A^{(ce)}_{ji}y^{(k)}_{j;ce} = \sum_{i\in V_0}\sum_{j\in V_0}A^{(ce)}_{ji}y^{(k)}_{j;ce}+\sum_{i\in V_0}\sum_{j\in V_1}A^{(ce)}_{ji}y^{(k)}_{j;ce}\\
        & \leq (1+\gamma)(d_{00}\sum_{j\in V_0}y^{(k)}_{j;ce}+d_{10}\sum_{j\in V_1}y^{(k)}_{j;ce}) \\
        &\leq (1+\gamma)^{k+1}(d_{00}\bar{M}^{(k)}+d_{10}\bar{N}^{(k)})\\
        &\leq (1+\gamma)^{k+1}\bar{M}^{(k+1)}
    \end{align*}
\normalsize
Similar arguments may be used for the lower bound and for $N$. Hence, by the definition of $x_{ce}$ we have
\scriptsize
\begin{align*}
    &\frac{2}{n}\sum_{i\in V_0}x_{i;ce}^{(k)}\in \left[\frac{(1-\gamma)^{k}}{(1+\gamma)^{k}}\frac{2}{n}\frac{\bar{M}^{(k)}}{\bar{M}^{(k)}+\bar{N}^{(k)}},\frac{(1+\gamma)^{k}}{(1-\gamma)^{k}}\frac{2}{n}\frac{\bar{M}^{(k)}}{\bar{M}^{(k)}+\bar{N}^{(k)}}\right]\\
    &\frac{2}{n}\sum_{i\in V_1}x_{i;ce}^{(k)}\in \left[\frac{(1-\gamma)^{k}}{(1+\gamma)^{k}}\frac{2}{n}\frac{\bar{N}^{(k)}}{\bar{M}^{(k)}+\bar{N}^{(k)}},\frac{(1+\gamma)^{k}}{(1-\gamma)^{k}}\frac{2}{n}\frac{\bar{N}^{(k)}}{\bar{M}^{(k)}+\bar{N}^{(k)}}\right].
\end{align*}
\normalsize
Based on our choice of $\gamma$, and the assumption that $\forall k\leq K$ we also have
\scriptsize
\begin{align*}
    &\frac{2}{n}\sum_{i\in V_0}x_{i;ce}^{(k)}\in \left[(1-\epsilon)\frac{2}{n}\frac{\bar{M}^{(k)}}{\bar{M}^{(k)}+\bar{N}^{(k)}},(1+\epsilon)\frac{2}{n}\frac{\bar{M}^{(k)}}{\bar{M}^{(k)}+\bar{N}^{(k)}}\right]\\
    &\frac{2}{n}\sum_{i\in V_1}x_{i;ce}^{(k)}\in \left[(1-\epsilon)\frac{2}{n}\frac{\bar{N}^{(k)}}{\bar{M}^{(k)}+\bar{N}^{(k)}},(1+\epsilon)\frac{2}{n}\frac{\bar{N}^{(k)}}{\bar{M}^{(k)}+\bar{N}^{(k)}}\right].
\end{align*}
\normalsize
Hence it remains to show that
$$\bar{a}^{(k)} = \frac{2}{n}\frac{\bar{M}^{(k)}}{\bar{M}^{(k)}+\bar{N}^{(k)}}$$
and
$$\bar{b}^{(k)} = \frac{2}{n}\frac{\bar{N}^{(k)}}{\bar{M}^{(k)}+\bar{N}^{(k)}}.$$
First, observe that
\begin{align}\label{eq:finally}
    &\bar{M}^{(k)}+\bar{N}^{(k)} = [1,1][\bar{M}^{(k)},\bar{N}^{(k)}]^T \\ \notag
    &= (\frac{n}{2})^{d-1}(p+(2^{d-1}-1)q)(\bar{M}^{(k-1)}+\bar{N}^{(k-1)}).
\end{align}
Replacing~\eqref{eq:finally} into~\eqref{eq:recurMN} we obtain the claimed recurrence relation for $\bar{a}$, $\bar{b}$. Given that the initial conditions agree, the result follows.

\section{Proof of Lemma~\ref{lma:Hconcentrate}}\label{app:pflma2}
The proof is almost identical to the proof of Lemma~\ref{lma:concentrate}. We first prove the concentration of $\sum_{l}A_{v_1,...,v_{d-1},l}$. Then, the second half of the proof is the same as the second half of the proof of Lemma~\ref{lma:concentrate}, provided that one modifies the arguments according to the tensor update rule (described in Section~\ref{sec:T-RWoH}). For simplicity we therefore only focus on the first half of the proof. As before, define $d_{0...0} = \mathbb{E}\sum_{l\in V_0}A_{v_1,...,v_{d-1},l},v_1,...,v_{d-1}\in V_0$ and all the other $d$ similarly. Note that $d_{0...0} = d_{1...1} = \frac{n}{2}p$ and that the remaining $d$ take the value $\frac{n}{2}q$. By Hoeffding's bound we have for $v_1,...,v_{d-1}\in V_0$,
\begin{align*}
   & \mathbf{P}\left \{\sum_{l\in V_0}A_{v_1,...,v_{d-1}l}\notin [(1-\gamma)d_{0...0},(1+\gamma)d_{0...0}] \right \} \\
    &\leq 2\exp(-np^2\gamma^2).
\end{align*}
Similarly, for $v_1,...,v_{d-1}\in V_0$,
\begin{align*}
    &\mathbf{P}\left \{\sum_{l\in V_1}A_{v_1,...,v_{d-1},l}\notin [(1-\gamma)d_{0...01},(1+\gamma)d_{0...01}] \right \} \\
    &\leq 2\exp(-nq^2\gamma^2).
\end{align*}
From the union bound over all possible $v_1,...,v_{d-1}$ and the community of $l$, we have $\forall r\in \{0,1\}$
%and $\forall v,u\in V_i,V_j,\;\forall i,j,r\in \{0,1\}$, we have

\begin{align*}
    &\mathbf{P}\Bigg \{\sum_{l\in V_r}A_{v_1,...,v_{d-1},l}\\
    &\notin [(1-\gamma)d_{\sigma(v_1)...\sigma(v_{d-1})r},(1+\gamma)d_{\sigma(v_1)...\sigma(v_{d-1})r}] \Bigg \} \\
    &\leq 4n^{d-1}\exp(-nq^2\gamma^2),
\end{align*}

where we denote $\sigma(v_i)$ to be the community of vertex $v_i$.
Therefore, if $\frac{n q^2}{\log(n)}\rightarrow \infty$, then for $n$ sufficiently large with high probability we have $\forall v_1,...,v_{d-1},\;\forall r\in \{0,1\}$,
\scriptsize
$$\sum_{l\in V_r}A_{v_1,...,v_{d-1},l}\in [(1-\gamma)d_{\sigma(v_1)...\sigma(v_{d-1})r},(1+\gamma)d_{\sigma(v_1)...\sigma(v_{d-1})r}],$$
\normalsize
with probability at least $1-4n^{d-1}\exp(-nq^2\gamma^2) = 1-o(1)$. This completes the proof.
% \textcolor{red}{The text below seems incomplete? Also, the two proofs above are not parallel in one you choose to work with in the set, and in the other outside the set arguments. Maybe reconcile?} \textcolor{olive}{I rewrite it into the case of $d$-hSBM. It should be clear now?}

\section{Proof of Theorem~\ref{cor:HgeqCE}}\label{sec:pfthm3}
From Theorem~\ref{thm:abapprox}, one can see that the characteristic function of $\beta$ equals
$$t^{d-1} = R(t^{d-2}+\cdots+1),$$
and the characteristic function of $\zeta$ equals
$$t^{d-1} = (2+R)t^{d-2}-R(t^{d-3}+\cdots+1),$$
where $R = \frac{p-q}{q}$. Hence $\beta_1$ and $\zeta_1$ satisfy
\begin{align*}
     \beta_1(k) &= R(\beta_1(k-1)+\cdots+ \beta_1(k-(d-1)))\\
     \zeta_1(k) &= (2+R)\zeta_1(k-1)\\
     &-R(\beta_1(k-2)+\cdots+ \beta_1(k-(d-1))).
\end{align*}
Moreover, from the initial condition in Theorem~\ref{thm:abapprox} we can easily observe that $\forall k\geq 0$, $\beta_1(k)\geq 0$ and $\zeta_1(k)\geq 0$. The proof then follows from
\scriptsize
\begin{align*}
    & \bar{w}_k = \frac{2}{n}\frac{\beta_1(k)}{\zeta_1(k)} \\
    &= \frac{2}{n}\frac{R(\beta_1(k-1)+\cdots+ \beta_1(k-(d-1)))}{(2+R)\zeta_1(k-1)-R(\beta_1(k-2)+\cdots+ \beta_1(k-(d-1)))}\\
    & \geq \frac{2}{n}\frac{\beta_1(k-1)}{\zeta_1(k-1)}\frac{R}{R+2} = \frac{R}{R+2}\bar{w}_{k-1} = \frac{p-q}{p+q}\bar{w}_{k-1}.
\end{align*}
\normalsize

\section{Remaining proof of Theorem~\ref{thm:abapprox}}\label{app:remainpfthm2}
 Let $(i)^{d-1}$ be the binary representation of $i$ of length $d-1$ and let $(i)^{d-1}[j]$ be the $j^{th}$ bit from the right in this representation (i.e. $(3)^{4} = 0011$ and $(3)^{4}[3] = 0$). For  arbitrary $d,$ we have $\bar{y}_{v_1,...,v_{d-1};h} = Y_{j}$ if $v_i\in V_{(j-1)^{d-1}[i]}$. It is clear that $Y_{j_1}^{(k)}$ depends only on $Y_{j_2}^{(k-1)}$ and $Y_{j_3}^{(k-1)}$ where $(j_2-1)^{d-1}[1:d-2] = (j_2-1)^{d-1}[2:d-1] = (j_3-1)^{d-1}[2:d-1]$ (i.e. state $001$ is determined by $100$ and $000$). By the similar analysis as the $d=3$ case, the recurrence relations becomes:
 \scriptsize
\begin{align*}%\label{eq:MFA2MCposteq1}
        &\begin{bmatrix}
            Y_1^{(k+1)}\\
            \vdots\\
            Y_{2^{d-1}}^{(k+1)}
        \end{bmatrix}
        =
        \begin{bmatrix}
        \frac{np}{2} & 0 & \cdots & \frac{nq}{2} & 0 & \cdots & 0\\
        \frac{nq}{2} & 0 & \cdots & \frac{nq}{2} & 0 & \cdots & 0\\
        0 & \frac{nq}{2} & 0 & \cdots & \frac{nq}{2}& 0 & \cdots & 0 \\
        0 & \frac{nq}{2} & 0 & \cdots & \frac{nq}{2}& 0 & \cdots & 0 \\
        &&&& \vdots\\
        0 & \cdots &\frac{nq}{2} & 0 & \cdots & \frac{nq}{2} \\
        0 & \cdots &\frac{nq}{2} & 0 & \cdots & \frac{np}{2}
        \end{bmatrix}
        \begin{bmatrix}
            Y_1^{(k)}\\
            \vdots\\
            Y_{2^{d-1}}^{(k)}
        \end{bmatrix},\\
        &\begin{bmatrix}
            Y_1^{(0)}\\
            \vdots\\
            Y_{2^{d-1}}^{(0)}
        \end{bmatrix}
        = \frac{2^{d-1}}{n^{d-1}}
        \begin{bmatrix}
            1\\
            0\\
            \vdots\\
            0
        \end{bmatrix}\nonumber
    \end{align*}
\normalsize
Applying these recurrence relations with the definition of $\beta, \zeta$ completes the proof.
% \section{Additional things...}

% \begin{figure}[!htb]
% \centering
% %   \subfigure[$4$-hyperedge]{\includegraphics[width=0.32\linewidth]{4-hyperedge.PNG}}
%   \subfigure[A $4$-hyperedge clique-expansion to edges]{\includegraphics[width=0.3\linewidth]{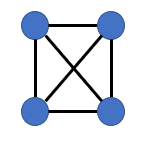}}
%   \quad\quad
%   \subfigure[A $4$-hyperedge clique-expansion to $3$-hyperedges]{\includegraphics[width=0.3\linewidth]{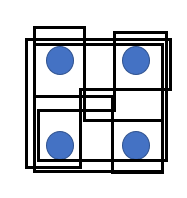}}
%   \caption{Explanation of the idea of CET.}\label{fig:explain2}
% \end{figure}
\end{document}